\theoremstyle{plain}
\newtheorem{theorem}{Theorem}[section]
\newtheorem{corollary}[theorem]{Corollary}
\theoremstyle{definition}
\theoremstyle{remark}
\newtheorem{remark}[theorem]{Remark}
\theoremstyle{plain}
\theoremstyle{definition}
\definecolor{cm}{RGB}{0,0,200}
\definecolor{purple}{RGB}{200,0,200}
\BODY\end{matrix}$}%
\BODY\end{align}}}% for testing
  \noindent\resizebox{\hsize}{!}{\usebox2}%
\title{Iterative Data Smoothing: Mitigating Reward Overfitting and Overoptimization in RLHF}
\date{}
\author[1]{Banghua Zhu}
\author[1]{Michael I. Jordan}
\author[1]{Jiantao Jiao}
\affil[1]{University of California, Berkeley}
\begin{document}

\maketitle

    % Banghua Zhu\upstairs{\affilone}\quad

    % Michael I.\ Jordan\upstairs{\affilone, \affiltwo} \quad
 
    %     Jiantao Jiao\upstairs{\affilone, \affiltwo}
    % \vspace*{.1in} \\

    % \upstairs{\affilone} Department of Electrical Engineering and Computer Sciences, UC Berkeley \\
    % \upstairs{\affiltwo} Department of Statistics, UC Berkeley
  
\begin{abstract}
Reinforcement Learning from Human Feedback (RLHF) is a pivotal technique that aligns language models closely with human-centric values. The initial phase of RLHF involves learning human values using a reward model from ranking data. It is  observed  that the performance of the reward model degrades after one  epoch of training, and optimizing too much against the learned reward model
eventually hinders the true objective. This paper delves into these issues, leveraging the theoretical insights to design improved reward learning algorithm termed 'Iterative Data Smoothing' (IDS). The core idea is that during each training epoch, we not only update the model with the data, but also update the date using the model, replacing hard labels with soft labels.   Our empirical findings highlight the superior performance of this approach over the traditional methods.

\end{abstract}

\section{Introduction}
Recent progress on Large Language Models (LLMs) is having a transformative effect not only in natural language processing but also more broadly in a range of AI applications~\citep{radford2019language, chowdhery2022palm,brown2020language,touvron2023llama,bubeck2023sparks, schulman2022chatgpt, openai2023gpt4, anthropic2023claude2}.  A key ingredient in the roll-out of LLMs is the fine-tuning step, in which the models are brought into closer alignment with specific behavioral and normative goals.  When no adequately fine-tuned, LLMs may exhibit undesirable and unpredictable behavior, including the fabrication of facts or the generation of biased and toxic content~\citep{perez2022red,ganguli2022red}. The current approach towards mitigating such problems is to make use of reinforcement learning based on human assessments.  In particular, Reinforcement Learning with Human Feedback (RLHF) proposes to use human assessments as a reward function from pairwise or multi-wise comparisons of model responses, and then fine-tune the language model based on the learned reward functions~\citep{ziegler2019fine, ouyang2022training, schulman2022chatgpt}. 

Following on from a supervised learning stage, a typical RLHF protocol involves two main steps:
\begin{itemize} 
    \item \textbf{Reward learning:} Sample prompts from a prompt dataset and generate multiple responses for the same prompt. Collect human preference data in the form of pairwise or multi-wise comparisons of different responses. Train a reward model based on the preference data. 
    \item \textbf{Policy learning:}  Fine-tune the current LLM based on the learned reward model with reinforcement learning algorithms.
\end{itemize}

Although RLHF has been successful in practice~\citep{bai2022training, ouyang2022training, dubois2023alpacafarm}, it is not without flaws, and indeed the current reward training paradigm grapples with  significant value-reward mismatches. There are two major issues with the current paradigm:
\begin{itemize}
    \item \textbf{Reward overfitting:} During the training of the reward model, it has been observed that the test cross-entropy loss of the reward model can deteriorate after one epoch of training~\cite{ouyang2022training}.
    \item \textbf{Reward overoptimization:} When  training the policy model to maximize the reward predicted by the learned model, it has been observed that the ground-truth reward can increase when the policy is close in KL divergence to the initial policy, but decrease with continued training~\citep{gao2023scaling}.
\end{itemize}

In this paper, we investigate these issues in depth. We  simplify the formulation of RLHF to a multi-armed bandit problem and  reproduce the overfitting and overoptimization phenomena. We leverage theoretical insights in the bandit setting to design new algorithms that work well under practical fine-tuning scenarios.

\subsection{Main Results}

As our first contribution, we pinpoint the root cause of both reward overfitting and overoptimization.  We show that it is the \textbf{inadequacy of the cross-entropy loss for long-tailed preference datasets}. As illustrated in Figure~\ref{fig:illustration}, even a simple 3-armed bandit problem can succumb to overfitting and overoptimization when faced with such imbalanced datasets. Consider a scenario where we have three arms with true rewards given by  $r_1^\star = 1,  r_2^\star = r_3^\star = 0$, and the preference distribution is generated by the Bradley-Terry-Luce (BTL) model~\citep{bradley1952rank}, i.e. $\mathbb{P}(i\succ j ) = \exp(r_i^\star) / (\exp(r_i^\star) + \exp(r_j^\star))$. Suppose our preference dataset compares the first and second arms $1000$ times but only compares the first and third arm once, and let $n(i\succ j)$ denote the number of times that arm $i$ is preferred over arm $j$. The standar empirical cross-entropy loss used in the literature for learning the reward model~\citep{ouyang2022training, zhu2023principled} can be written as follows:
\begin{align*}
    - \sum_{i, j} n(i\succ j) \log\left(\frac{\exp(r_i)}{\exp(r_i)+\exp(r_j)}\right). 
\end{align*}
We know that the empirical values $n(1\succ 2)$ and $n(2\succ 1)$ concentrate around their means. However, we have  with probability $0.73$, $n(1\succ 3) = 1$ and $n(3\succ 1) = 0$, and with probability $0.27$, $n(1\succ 3) = 0$ and $n(3\succ 1) = 1$. In either case, the minimizer of the empirical entropy loss will satisfy either $\hat r_1 - \hat r_3 =-\infty$ or $\hat r_1 - \hat r_3 =+\infty$. This introduces a huge effective noise when the coverage is imbalanced. Moreover, the limiting preference distribution is very different from the ground truth distribution, leading to reward overfitting. Furthermore, since there is $0.27$ probability that $\hat r_1 - \hat r_3 = -\infty$, we will take arm $3$ as the optimal arm instead of arm $1$. This causes reward overoptimization during the stage of policy learning since the final policy converges to the wrong arm with reward zero.

\begin{figure}[!htbp]
    \centering
    \includegraphics[width=0.99\linewidth]{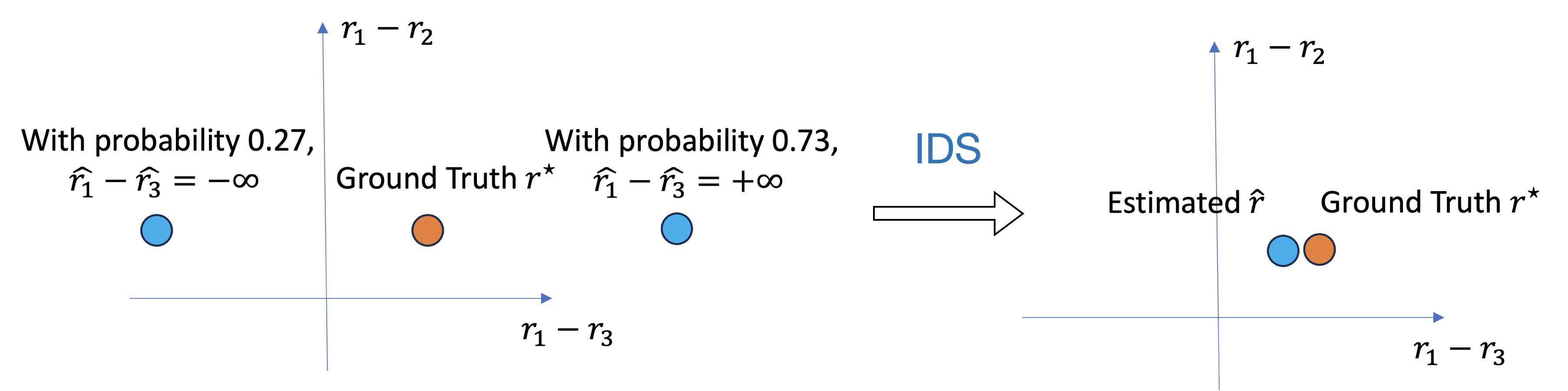}
    \caption{Illustration of the problem of the vanilla empirical cross-entropy minimization for learning the ground truth reward. With a small number of samples comparing arm $1$ and $3$, the minimization converges to a solution which assigns $\hat r_1 - \hat r_3 = -\infty$ with constant probability. With the proposed Iterative Data Smoothing (IDS) algorithm, the estimator is able to recover the ground truth reward.}
    \label{fig:illustration}
\end{figure}

To mitigate these effects, we leverage the pessimism mechanism from bandit learning to analyze and design a new algorithm, Iterative Data Smoothing (IDS), that simultaneously addresses both reward overfitting and reward overoptimization. The algorithm design is  straightforward: in each epoch, beyond updating the model with the data, we also adjust the data using the model.  Theoretically, we investigate the two phenomena in the tabular bandit case. We show that   the proposed method, as an alternative to the lower-confidence-bound-based algorithm~\citep{jin2021pessimism, xie2021bellman, rashidinejad2021bridging, zhu2023principled}, learns the ground truth distribution for pairs that are compared enough times, and ignores infrequently covered comparisons thereby mitigating issues introduced by long-tailed data. Empirically, we present experimental evidence that the proposed method improves reward training in both bandit and neural network settings.

\subsection{Related Work}
\paragraph{RLHF and Preference-based Reinforcement Learning.} RLHF, or Preference-based Reinforcement Learning (PbRL), has delivered significant empirical success in the fields of game playing, robot training, stock prediction, recommender systems, clinical trials and natural language processing~\citep{sui19,sadigh17,nipsPRL17,kupcsik18,jain13,wirth17, knox2008tamer, macglashan2017interactive, christiano2017deep, warnell2018deep, brown2019extrapolating, shin2023benchmarks, ziegler2019fine, stiennon2020learning, wu2021recursively, nakano2021webgpt,ouyang2022training, menick2022teaching, glaese2022improving, gao2022scaling, bai2022training, ganguli2022red, ramamurthy2022reinforcement}. In the setting of the language models, there has been work exploring the efficient fine-tuning of the policy model~\citep{snell2022offline, song2023preference, yuan2023rrhf, zhu2023fine, rafailov2023direct, wu2023pairwise}. 

In the case of reward learning,  \citet{ouyang2022training} notes that in general the reward can only be trained for one epoch in the RLHF pipeline, after which the test loss can go up.  
 \citet{gao2023scaling} studies the scaling law of training the reward model, and notes that overoptimization is another problem in reward learning.  To address the problem, \citet{zhu2023principled} propose a pessimism-based method that  improves the policy trained from the reward model when the optimal reward lies in a linear family. It is observed in~\citet{song2023reward} that the reward model tends to be identical regardless of whether the  prompts are open-ended or closed-ended during the terminal
phase of training, and they propose a prompt-dependent reward optimization scheme. 
 
Another closely related topic is the problem of estimation and ranking from pairwise or $K$-wise comparisons.  In the literature of \textit{dueling bandit},  one compares two actions and aims to minimize regret based on pairwise comparisons~\citep{Yue+12,Zoghi+14RCS, Yue+09,BTM,RDB,GS21,SG18, Ailon+14,Zoghi+14RUCB,Komiyama+15,Adv_DB,SGrank18,SGwin18, faury2020improved}.
\cite{sui19, xu20}  analyze the sample complexity of dueling RL agents in the tabular case, which is extended to linear case and function approximation by the recent work of~\citet{pacchiano2021dueling, chen2022human}. \citet{chatterji2022theory} studies a related setting where in each episode only binary feedback is received. Most of the theoretical work of learning from ranking focuses on  regret minimization,  while RLHF focuses more on the quality of the final policy.

\paragraph{Knowledge Distillation}
The literature of knowledge distillation focuses on transferring the knowledge from a teacher model to a student model \citep{hinton2015distilling, furlanello2018born, cho2019efficacy, zhao2022decoupled, romero2014fitnets, yim2017gift, huang2017like, park2019relational, tian2019contrastive, tung2019similarity, qiu2022better, cheng2020explaining}. It is observed in this literature that the soft labels produced by the teacher network can help train a better student network, even when the teacher and student network are of the same size and structure~\cite{hinton2015distilling}.  \citet{furlanello2018born} present a method which iteratively trains a new student  network after the teacher network achieves the smallest evaluation loss. Both our iterative data smoothing algorithm and these knowledge distillation methods learn from soft labels. However, iterative data smoothing iteratively updates the same model and data, while knowledge distillation method usually focuses on transferring knowledge from one model to the other.  

\section{Formulation}

We begin with the notation that we use in the paper. Then we introduce the general formulation of RLHF, along with our simplification in the multi-armed bandit case.

\textbf{Notations.} We use calligraphic letters for sets, e.g., $\mathcal{S}$ and $ \mathcal{A}$. Given a set $\mathcal{S}$, we write $|\mathcal{S}|$ to represent the cardinality of $\mathcal{S}$.   
We use $[K]$ to denote the set of integers from $1$ to $K$. We use $\mu(a, a')$ to denote the probability of comparing $a$ and $a'$ in a preference dataset, and $\mu(a) = \sum_{a'\in\mathcal{A}} \mu(a, a')$ to denote the probability of comparing $a$ with any other arms. Similarly, we use $n(a), n(a, a')$ to denote the number of samples that compare $a$ with any other arms, and the number of samples that compare $a$ with $a'$, respectively.  We  use $a_1\succ a_2$ to denote the event that the  $a_1$ is more preferred compared to $a_2$. 

\subsection{General Formulation of RLHF}
The key components in RLHF consist of two steps: reward learning and policy learning. We briefly introduce the general formulation of RLHF below.

In the stage of reward learning, one collects a preference dataset based on a prompt  dataset and responses to the prompts.  According to the formulation of \citet{zhu2023principled}, for the $i$-th  sample, a state (prompt) $s_i$ is first sampled from some prompt distribution $\rho$. Given the state $s_i$, $M$ actions (responses) $(a^{(1)}_i, a^{(2)}_i,\cdots, a^{(M)}_i)$ are sampled from some joint distribution $\mathbb{P}(a^{(1)},\cdots, a^{(M)} \mid s_i)$,
Let $\sigma_i:[M]\mapsto [M]$ be the output of the human labeller, which is a permutation function that denotes the ranking of  the actions. Here $\sigma_i(1)$ represents the most preferred action, and $\sigma_i(M)$ is the least preferred action. 
A common model for the distribution of $\sigma$ under multi-ary     comparisons is  a Plackett-Luce model~\citep{plackett1975analysis,luce2012individual}.
The Plackett-Luce model  defines the probability of  a state-action pair $(s, a_i)$ being the largest among a given set $\{(s, a_i)\}_{i=1}^{M}$ as
\begin{align*}
\mathbb{P}(a_i \succ a_j, \forall j\neq i \mid s)=
\frac{\exp(r^\star{(s, a_i)}) }{\sum_{j=1}^{M} \exp(r^\star{(s, a_j)})}, 
\end{align*}
where $r^\star:\mathcal{S}\times\mathcal{A}\mapsto \mathbb{R}$ is the ground-truth reward for the response given the prompt.
Moreover,  the probability of observing the permutation $\sigma$ is defined as\footnote{In practice, one may introduce an extra temperature parameter $\sigma$ and replace all $r^\star$ with $r^\star/\sigma$. Here  we take  $\sigma=1$.}
\begin{align*}
\mathbb{P}(\sigma\mid s, \{a^{(i)}\}_{i=1}^{M})=
\prod_{i=1}^{M}\frac{\exp(r(s, a^{(\sigma(i))}) )}{\sum_{j=i}^{M} \exp(r(s, a^{(\sigma(j))}))}.
\end{align*}

When $M=2$, this reduces to the pairwise comparison considered in the Bradley-Terry-Luce (BTL) model~\citep{bradley1952rank}, which is  used in existing RLHF algorithms. In this case, the permutation $\sigma$ can be reduced to a Bernoulli random variable, representing whether one action is preferred compared to the other. Concretely, for each queried state-actions pair $(s, a, a')$, we  observe a sample $c$ from a Bernoulli distribution with parameter $\frac{\exp(r_{\theta^\star}(s, a))}{\exp(r_{\theta^\star}(s, a))+\exp(r_{\theta^\star}(s, a'))}$.
Based on the observed dataset, the cross-entropy loss is minimized to estimate the ground-truth reward for the case of pairwise comparison. The minimizer of cross-entropy loss is the maximum likelihood estimator:
\begin{align}
\hat r_{\mathsf{MLE}} & \in \argmin_{r}    \mathcal{L}_{\mathsf{CE}}(\mathcal{D}, r),\nonumber \\
\mathcal{L}_{\mathsf{CE}}(\mathcal{D}, r) & = -\sum_{i=1}^n \log \Big(\frac{y_i\cdot \exp(r(s_i, a^{(1)}_i))}{\exp(r(s_i, a^{(1)}_i))+\exp(r(s_i, a^{(2)}_i))} +  \frac{(1-y_i)\cdot\exp(r(s_i, a^{(2)}_i))}{\exp(r(s_i, a_i^{(1)}))+\exp(r(s_i, a_i^{(2)}))}\Big). \nonumber
\end{align}

After learning the reward, we aim to learn the optimal policy under KL regularization with respect to an initial policy $\pi_0$ under some state (prompt) distribution $\rho'$. $$\hat\pi = \argmax_{\pi} \mathbb{E}_{s\sim \rho', a\sim \pi} [\hat r_{\mathsf{MLE}}(s, a)] - \lambda \cdot \mathbb{E}_{s\sim\rho'}[\mathsf{KL}(\pi(\cdot \mid s)\|\pi_0(\cdot \mid s))].$$
\subsection{RLHF in Multi-Armed Bandits }

To understand the overfitting and overoptimization problems, we simplify the RLHF problem to consider a single-state multi-armed bandit formulation with pairwise comparisons.  Instead of fitting a reward model and policy model with   neural networks, we fit a tabular reward model in a $K$-armed bandit problem. 

Consider a multi-armed bandit problem with $K$ arms. Each arm has a deterministic ground-truth reward   $r^\star(k) \in \mathbb{R}, k\in[K]$. In this case, the policy becomes a distribution supported on the $K$ arms $\pi\in\Delta([K])$. 
The sampling process for general RLHF reduces to the following: we first sample two actions $a_i$, $a_i'$ from a joint distribution $\mu \in \Delta([K]\times [K])$, and then observe a binary comparison variable $y_i$ following a distribution 
\begin{align*}
    \mathbb{P}(y_i = 1) = \frac{\exp(r^\star(a_i))}{\exp(r^\star(a_i)) + \exp(r^\star(a_i'))},  \quad \mathbb{P}(y_i = 0) = 1- \mathbb{P}(y_i = 1).  
\end{align*}

Assume that we are given $n$ samples, which are sampled  $i.i.d.$ from the above process. Let $n(a, a')$ be the total number of comparisons between actions $a$ and $a'$ in the $n$ samples. 
Let the resulting dataset be $\mathcal{D} = \{a_i, a_i', y_i\}_{i=1}^n$. The tasks in RLHF for multi-armed bandit setting can be simplified as:
\begin{enumerate}
    \item \textbf{Reward learning:} Estimate true reward $r^\star$ with a proxy reward $\hat r$ from the comparison dataset $\mathcal{D}$.  
    \item  \textbf{Policy learning:} Find a policy $\pi\in\Delta([K])$ that maximizes the proxy reward under KL constraints. 
\end{enumerate}
In the next two sections, we discuss separately the reward learning phase and policy learning phase, along with the reasons behind overfitting and overoptimization.

\subsection{Overfitting in Reward Learning}

For reward learning, 
the commonly used maximum likelihood estimator is the estimator that minimizes empirical cross-entropy loss: 
\begin{align}
\label{eq:MLE}
\hat r_{\mathsf{MLE}} & = \argmin_{r} \hat{\mathcal{L}}_{\mathsf{CE}}(\mathcal{D}, r), \text{ where }\\
\hat{\mathcal{L}}_{\mathsf{CE}}(\mathcal{D}, \hat r) & = -\frac{1}{n}
\sum_{i=1}^n   y_i \log\left(\frac{\exp(\hat r(a_i))}{\exp(\hat r(a_i)) + \exp(\hat r(a_i'))}\right)   + (1-y_i)  \log\left(\frac{\exp(\hat r(a_i'))}{\exp(\hat r(a_i)) + \exp(\hat r(a_i'))}\right). \nonumber
\end{align}

By definition, $\hat r_{\mathsf{MLE}}$ is convergent point when we optimize the empirical cross entropy fully. Thus the population cross-entropy loss of $\hat r_{\mathsf{MLE}}$ is an indicator for whether overfitting exists during reward training.

We  define the population cross entropy loss as 
\begin{align*}
    \mathcal{L}_{\mathsf{CE}}(r) & = -\mathbb{E}_{(a, a')\sim \mu, y\sim \mathsf{Ber}\left(\frac{\exp(r^\star(a))}{\exp(r^\star(a)) + \exp(r^\star(a'))}\right)} \Bigg[y\log\left(\frac{\exp(r(a))}{\exp(r(a)) + \exp(r(a'))}\right)   \\ 
    & \qquad + (1-y)\log\left(\frac{\exp( r(a'))}{\exp(r(a)) + \exp(\hat r(a'))}\right)\Bigg]  \\
    & = -\mathbb{E}_{(a, a')\sim \mu} \Bigg[\frac{\exp(r^\star(a))}{\exp(r^\star(a)) + \exp(r^\star(a'))}\log\left(\frac{\exp(r(a))}{\exp(r(a)) + \exp(r(a'))}\right)  \\
    & \qquad + \frac{\exp(r^\star(a'))}{\exp(r^\star(a)) + \exp(r^\star(a'))}\log\left(\frac{\exp( r(a'))}{\exp(r(a)) + \exp(\hat r(a'))}\right)\Bigg].
\end{align*}

For a fixed pairwise comparison distribution $\mu$, 
it is known that the maximum likelihood estimator $\hat r_{\mathsf{MLE}}$ converges to the ground truth reward $r^\star$ as the number of samples $n$ goes to infinity. 

\begin{theorem}[Consistency of MLE, see, e.g., Theorem 6.1.3. of \citet{hogg2013introduction}]\label{thm:mle}
    Fix $r^\star(K) = \hat r(K)=0$ for the uniqueness of the solution. For any fixed $\mu$, and any given ground-truth reward $r^\star$, we have that $\hat r_{\mathsf{MLE}}$ converges in probability to $r^\star$; i.e., for any $\epsilon>0$,
    \begin{align*}
   \lim_{n\rightarrow +\infty} \mathbb{P}\left( \|\hat r_{\mathsf{MLE}} - r^\star\|_\infty\geq \epsilon \right)  = 0. 
    \end{align*}
    Here we view $\hat r_{\mathsf{MLE}}$ and $r^\star$ as $K$-dimensional vectors.
\end{theorem}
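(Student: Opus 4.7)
The plan is to establish consistency by the classical M-estimator recipe: identify the unique population minimizer, verify uniform convergence of the empirical to the population loss on compact sets, and then conclude via argmin continuity.

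The first step is to show that $\mathcal{L}_{\mathsf{CE}}(r)$ is uniquely minimized at $r = r^\star$ under the normalization $r(K) = 0$. A direct rearrangement of the population loss gives
\begin{align*}
\mathcal{L}_{\mathsf{CE}}(r) - \mathcal{L}_{\mathsf{CE}}(r^\star) = \sum_{(a,a')} \mu(a,a') \, \kl{\mathsf{Ber}(p^\star_{a,a'})}{\mathsf{Ber}(p_{a,a'})},
\end{align*}
where $p_{a,a'} = \exp(r(a))/(\exp(r(a)) + \exp(r(a')))$ and $p^\star_{a,a'}$ is defined analogously from $r^\star$. Nonnegativity of KL, together with the standard BTL identifiability condition that the comparison graph $\{(a,a') : \mu(a,a') > 0\}$ is connected, pins down every pairwise difference $r(a) - r(a')$ to $r^\star(a) - r^\star(a')$; the normalization $r(K) = 0$ then forces $r = r^\star$. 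Strict convexity of the log-partition function further gives a quadratic lower bound $\mathcal{L}_{\mathsf{CE}}(r) - \mathcal{L}_{\mathsf{CE}}(r^\star) \gtrsim \|r - r^\star\|_\infty^2$ on compact sets, so the population minimizer is well-separated and $\mathcal{L}_{\mathsf{CE}}$ is coercive on the reduced parameter space $\{r : r(K)=0\}$.

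The second step is uniform convergence. Writing $\hat{\mathcal{L}}_{\mathsf{CE}}(\mathcal{D}, r)$ as a finite sum over arm-pairs, each summand is a sample average of a continuous, uniformly bounded function of $r$ on compacts; a pairwise uniform law of large numbers then yields $\sup_{r \in \mathcal{K}} |\hat{\mathcal{L}}_{\mathsf{CE}}(\mathcal{D}, r) - \mathcal{L}_{\mathsf{CE}}(r)| \to 0$ in probability for every compact $\mathcal{K}$. Combining with coercivity of the population loss, the MLE lies in a fixed compact neighborhood of $r^\star$ with probability tending to one; the quadratic separation from the first step then upgrades this to $\|\hat r_{\mathsf{MLE}} - r^\star\|_\infty \to 0$ in probability, which is the claim.

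The main obstacle is the localization: since $\{r : r(K) = 0\} \cong \mathbb{R}^{K-1}$ is noncompact, one must rule out components of $\hat r_{\mathsf{MLE}}$ drifting to $\pm\infty$ at finite $n$. This is precisely the failure mode the paper highlights — as the example in Figure~\ref{fig:illustration} shows, when $\mu(1,3)$ is tiny the MLE can send $\hat r_1 - \hat r_3$ to $\pm\infty$ with constant probability at finite $n$. Consistency still holds because such events have vanishing probability as $n \to \infty$ for any fixed $\mu$, but the effective rate degrades with $\min_{a,a'} \mu(a,a')$, foreshadowing the overfitting story developed in the sequel.
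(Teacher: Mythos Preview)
Your argument is correct, but it takes a different route from the paper. The paper's own proof is a three-line verification of the hypotheses of Theorem~6.1.3 in \citet{hogg2013introduction}: identifiability of $r$ from the pairwise preference probabilities (after fixing $r(K)=0$), common support of the densities, and $r^\star$ being an interior point of $\mathbb{R}^K$. You instead write out the M-estimator argument from scratch---KL decomposition of the population excess loss, well-separatedness via strict convexity, a uniform law of large numbers on compacts, and a coercivity-based localization. The two are of course the same mathematics under the hood, but your version is self-contained and, notably, surfaces exactly the compactness issue that drives the paper's finite-sample story: the localization step is where things go wrong at finite $n$ under long-tailed $\mu$, and you flag this explicitly.

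One small point to tighten: the sentence ``Combining with coercivity of the population loss, the MLE lies in a fixed compact neighborhood of $r^\star$ with probability tending to one'' does not quite follow from population coercivity plus uniform convergence on compacts alone---you need either (i) the empirical loss to be coercive, which holds once every edge $(a,a')$ with $\mu(a,a')>0$ has been observed in both directions (an event of probability $\to 1$ since $p^\star_{a,a'}\in(0,1)$), or (ii) convexity of the empirical loss together with a boundary comparison. Either fix is routine, but as written the step is a touch elliptical.
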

The proof is deferred to Appendix~\ref{proof:mle_consistency}. 
This suggests that the overfitting phenomenon does not arise when we have an infinite number of samples. 
However, in the non-asymptotic regime when the comparison distribution $\mu$ may depend on $n$, one may not expect convergent result for MLE. We have the following theorem.

\begin{theorem}[Reward overfitting of MLE in the non-asymptotic regime]\label{thm:lower_mle}
    Fix $r^\star(a) = \mathbbm{1}(a=1)$ and $\hat r(K) = 0$ for uniqueness of the solution. For any fixed $n >  500$, there exists some 3-armed bandit problem such that with probability at least $0.09$, 
    \begin{align*}
    \mathcal{L}_{\mathsf{CE}}(\hat r_{\mathsf{MLE}}) - \mathcal{L}_{\mathsf{CE}}(r^\star)  \geq C
    \end{align*}
    for any arbitrarily large $C$.
    % \begin{align*} \exists 2\leq k \leq K-1, \hat r(k) = B.
    % \end{align*}
    % Thus even when the reward for arm $1$ is estimated perfectly $\hat r(1) = 1$, the population cross-entropy loss on the same distribution is at least ${B}/{n}$.
\end{theorem}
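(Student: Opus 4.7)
The plan is to exhibit a long-tailed 3-armed bandit where, on a constant-probability event, the empirical MLE is driven to the boundary and the population cross-entropy diverges. Set $r^\star(1) = 1$, $r^\star(2) = r^\star(3) = 0$, and let $\mu$ place probability $1 - 1/n$ on the unordered pair $\{1,2\}$ and probability $1/n$ on $\{1,3\}$. Let $N := n(1,3) + n(3,1) \sim \text{Binomial}(n, 1/n)$, and let $E$ be the event that $N = 1$ and arm $3$ wins the unique $(1,3)$-comparison, i.e., $n(3,1) = 1$ and $n(1,3) = 0$.

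The probability of $E$ is easy to bound. Since $n \mapsto (1 - 1/n)^{n-1}$ is decreasing to $1/e$, $\Prob(N = 1) = (1-1/n)^{n-1} \ge 1/e$ for every $n \geq 2$, and the BTL model gives $\Prob(\text{arm 3 wins} \mid N = 1) = 1/(1+e)$. Hence $\Prob(E) \ge 1/(e(1+e)) > 0.09$.

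Next I show that the MLE degenerates on $E$. Writing $\sigma(x) := e^x/(1+e^x)$ for brevity and fixing $\hat r(3) = 0$, the stationarity condition $\partial \hat{\mathcal{L}}_{\mathsf{CE}}/\partial \hat r(2) = 0$ yields $\hat r(1) - \hat r(2) = \log(n(1,2)/n(2,1))$, a finite quantity since on $E$ both counts are positive except on a sub-event of probability $O((e/(1+e))^{n-1})$, which is negligible for $n > 500$. Substituting back into $\partial \hat{\mathcal{L}}_{\mathsf{CE}}/\partial \hat r(1) = 0$ cancels the $(1,2)$-terms and leaves $n(3,1)\,\sigma(\hat r(1)) = 0$, forcing $\hat r(1) \to -\infty$ and thus $\hat r(2) \to -\infty$. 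The population cross-entropy contribution from the $(1,3)$-pair is
\[
-\mu(\{1,3\})\Big[\tfrac{e}{1+e}\log \sigma(\hat r(1)) + \tfrac{1}{1+e}\log \sigma(-\hat r(1))\Big],
\]
whose first term diverges to $+\infty$ as $\hat r(1) \to -\infty$, while the other pair $\{1,2\}$ contributes a bounded quantity since only the difference $\hat r(1) - \hat r(2)$ enters. This gives the claim on $E$.

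The main obstacle is handling the non-existence of an interior minimizer of $\hat{\mathcal{L}}_{\mathsf{CE}}$ on $E$. I would formalize the argument either by interpreting $\hat r_{\mathsf{MLE}}$ as the limit of the MLE restricted to $[-B,B]^3$ as $B \to \infty$, or by evaluating $\mathcal{L}_{\mathsf{CE}}$ along any sequence driving $\hat{\mathcal{L}}_{\mathsf{CE}}$ to its infimum; either way the population loss grows without bound along the sequence, giving the conclusion for any prescribed $C$.
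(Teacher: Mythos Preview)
Your proposal is correct and follows essentially the same approach as the paper: the same construction $\mu(\{1,2\})=1-1/n$, $\mu(\{1,3\})=1/n$, the same event (exactly one $(1,3)$-comparison, won by arm $3$), and the same conclusion that the MLE drives $\hat r(1)-\hat r(3)\to-\infty$, blowing up the population cross-entropy. Your write-up is in fact more detailed than the paper's brief sketch---you carry out the stationarity computation explicitly and flag the non-existence of an interior minimizer, both of which the paper leaves implicit.
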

The proof is deferred to Appendix~\ref{proof:lower_mle}. Below we provide a intuitive explanation. 
The constructed hard instance is a  bandit where $r^\star(a) = \mathbbm{1}(a=1)$. For any fixed $n$, we set $\mu(1, 2) = 1-1/n$,  $\mu(1, 3) = 1/n$.  

In this hard instance, there is constant probability that arm $3$ is only compared with $1$ once. And with constant probability, the observed comparison result between arm $1$ and arm $3$ will be different from the ground truth. The MLE will assign $r(3)=+\infty$ since the maximizer of $\log(\exp(x)/(1+\exp(x)))$ is infinity when $x$ is not bounded. Thus when optimizing the empirical cross entropy fully, the maximum likelihood estimator will result in a large population cross-entropy loss. We also validate this phenomenon in Section~\ref{sec:exp_bandit} with simulated experiments.

This lower bound instance simulates the high-dimensional regime where the number of samples is comparable to the dimension, and the data coverage is unbalanced across dimensions.  One can also extend the lower bound to more than 3 arms, where the probability of the loss being arbitrarily large  will be increased to close to 1 instead of a small constant.
\subsection{Overoptimization in Policy Learning}

After obtaining the estimated reward function $\hat r$, we optimize the policy $\pi\in\Delta([K])$ to maximize the estimated reward. In RLHF, one starts from an initial (reference) policy $\pi_0$, and optimizes the new policy $\pi$ to maximize the estimated  reward $\hat r$ under some constraint in KL divergence between $\pi$ and $\pi_0$. It is observed in~\cite{gao2022scaling} that as we continue optimizing the policy to maximize the estimated reward, the true reward of the policy will first increase then decrease, exhibiting the reward overoptimization phenomenon.

Consider the following policy optimization problem for a given reward model $\hat r$:
\begin{align}\label{eq:policy_learning}
    \max_{\pi\in\Delta([K])} \mathbb{E}_{a\sim \pi(\cdot)} [\hat r(a)] - \frac{1}{\lambda} \cdot \mathsf{KL}(\pi \| \pi_{0}).
\end{align}
Assuming that the policy gradient method converges to the optimal policy for the above policy optimization problem, which  has a closed-form solution: 
\begin{align}
    \pi_\lambda(a) = \frac{\pi_0(a)\cdot  \exp(\lambda\cdot \hat r(a))}{\sum_{a'\in\mathcal{A}} \pi_0(a')\cdot  \exp(\lambda \cdot \hat r(a'))} \label{eq:pi_lambda}.
\end{align}

In the tabular case, we can derive a closed form solution for how the KL divergence and ground-truth reward change with respect to $\lambda$, thus completely characterizing the reward-KL tradeoff. 
We compute the KL divergence and ground-truth reward of the policy as 
\begin{align*}
    \mathsf{KL}(\pi_\lambda \|\pi_0)  & =  \frac{\sum_{a\in\mathcal{A}} \pi_0(a)\cdot  \exp(\lambda\cdot \hat r(a)) \cdot \log(\exp(\lambda\cdot \hat r(a)) / (\sum_{a'\in\mathcal{A}} \pi_0(a')\cdot  \exp(\lambda\cdot\hat r(a'))))}{\sum_{a'\in\mathcal{A}} \pi_0(a')\cdot  \exp(\lambda\cdot\hat r(a'))} \\
    & =  \frac{\sum_{a\in\mathcal{A}} \pi_0(a)\cdot  \exp(\lambda\cdot\hat r(a)) \cdot \lambda\cdot\hat r(a)}{\sum_{a'\in\mathcal{A}} \pi_0(a')\cdot  \exp(\lambda\cdot\hat r(a'))} - \log\left(\sum_{a'\in\mathcal{A}} \pi_0(a')\cdot  \exp(\lambda\cdot\hat r(a'))\right), \\
    \mathbb{E}_{a\sim\pi_\lambda}[r^\star(a)] & = \frac{\sum_{a\in\mathcal{A}} \pi_0(a)\cdot  \exp(\lambda\cdot\hat r(a)) \cdot  \lambda\cdot r^\star(a)}{\sum_{a'\in\mathcal{A}} \pi_0(a')\cdot  \exp(\lambda\cdot\hat r(a'))}.
\end{align*}
The above equation provides a precise characterization of how the mismatch between $\hat r$ and $r^\star$ leads to the overoptimization phenomenon, which can be validated from the experiments in Section~\ref{sec:exp}. 
To simplify the analysis and provide better intuition, we focus on the case when $\lambda\rightarrow \infty$, i.e., when the optimal policy selects the best empirical arm without considering the KL constraint. In this case, the final policy reduces to the empirical best arm, $\pi_\infty(a) = \mathbbm{1}(a=\argmax_{a'} \hat r(a'))$.

By definition, $\pi_\infty$ is the convergent  policy when we keep loosening the KL divergence constraint in Equation (\ref{eq:policy_learning}). Thus the performance of  $\pi_\infty$ is a good indicator of whether overoptimization exists during policy training. We thus define a notion fo sub-optimality to characterize the performance gap between the convergent policy and the optimal policy:
\begin{align*}
\mathsf{SubOpt}(\hat\pi) \coloneqq
\max_a \mathbb{E}[r^\star(a) -r^\star( \hat \pi)].
\end{align*}

We know from Theorem~\ref{thm:mle} that, asymptotically, the MLE for reward $\hat r_{\mathsf{MLE}}$ converges to the ground truth reward $r^\star$. As a direct result, when using the MLE as reward, the sub-optimality of the policy $\pi_\infty$ also converges to zero with an infinite number of samples.

However, as a corollary of Theorem~\ref{thm:lower_mle} and a direct consequence of reward overfitting, $\pi_\infty$ may have large sub-optimality in the non-asymptotic regime when trained from $\hat r_{\mathsf{MLE}}$.  
\begin{corollary}[Reward overoptimization of MLE in the non-asymptotic regime]\label{thm:fail_mle}
    Fix $r^\star(a) = \mathbbm{1}(a=1)$. For any fixed $n$, there exists some 3-armed bandit problem such that with probability at least $0.09$, 
    \begin{align*}
    \mathsf{SubOpt}(\hat \pi_{\infty})\geq 1.
    \end{align*}
    \end{corollary}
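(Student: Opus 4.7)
The plan is to reuse the hard instance from the proof of Theorem~\ref{thm:lower_mle}---a three-armed bandit with $r^\star(1) = 1$, $r^\star(2) = r^\star(3) = 0$, comparison distribution $\mu(1,2) = 1 - 1/n$ and $\mu(1,3) = 1/n$, and normalization $\hat r(3) = 0$---and then upgrade the ``reward overfitting'' event into a statement about $\pi_\infty$. The guiding intuition is that whenever the MLE is driven to $\hat r(1) \to -\infty$, arm $3$ (with $\hat r(3) = 0$) is strictly the empirical best arm, so $\pi_\infty$ concentrates on the suboptimal arm $3$ and suffers sub-optimality exactly $r^\star(1) - r^\star(3) = 1$.

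First, I would isolate a clean bad event $\mathcal{E}_0 := \{n(1,3) = 1 \text{ and arm } 3 \text{ wins the single } (1,3) \text{ comparison}\}$. Since $n(1,3) \sim \mathrm{Binomial}(n, 1/n)$ and, under BTL, arm $3$ beats arm $1$ with probability $1/(1+e)$, independence gives
\[
\mathbb{P}(\mathcal{E}_0) \;=\; (1 - 1/n)^{n-1}\cdot \frac{1}{1+e} \;\geq\; \frac{1}{e(1+e)} \;>\; 0.09,
\]
where the uniform-in-$n$ lower bound uses that $(1 - 1/n)^{n-1}$ is monotonically decreasing in $n$ with limit $e^{-1}$.

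Second, on $\mathcal{E}_0$ the empirical cross-entropy~\eqref{eq:MLE} (with $\hat r(3) = 0$) splits into $(1,2)$-comparison terms depending only on $\hat r(1) - \hat r(2)$ and a single $(1,3)$-term equal to $\log(1 + \exp(\hat r(1)))$, which is strictly increasing in $\hat r(1)$. Pushing $\hat r(1), \hat r(2) \to -\infty$ while holding $\hat r(1) - \hat r(2)$ pinned to the empirical log-odds of the $(1,2)$ data drives the loss to its infimum, so along any minimizing sequence $\hat r_{\mathsf{MLE}}(3) = 0 > \hat r_{\mathsf{MLE}}(1), \hat r_{\mathsf{MLE}}(2)$, and hence $\pi_\infty$ is the point mass on arm $3$. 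This yields $\mathsf{SubOpt}(\pi_\infty) = 1$ on $\mathcal{E}_0$ and completes the argument.

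The only point requiring care, rather than genuine difficulty, is that the empirical loss is not attained: its infimum is only approached as $\hat r(1) \to -\infty$. This is handled cleanly by the normalization $\hat r(3) = 0$, which makes $\argmax_a \hat r_{\mathsf{MLE}}(a)$ unambiguous along any minimizing sequence, so that $\pi_\infty$---defined as the $\lambda \to \infty$ limit of~\eqref{eq:pi_lambda}---is the point mass on arm $3$.
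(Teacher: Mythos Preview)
Your proposal is correct and follows essentially the same route as the paper: reuse the hard instance from Theorem~\ref{thm:lower_mle}, identify the event where arm~$3$ is compared once and wins, and observe that the MLE then ranks arm~$3$ above arm~$1$, forcing $\pi_\infty$ onto a zero-reward arm. Your treatment is in fact more careful than the paper's one-line proof in two respects: you establish the $0.09$ bound uniformly in $n$ via $(1-1/n)^{n-1}\geq e^{-1}$ (whereas the paper's proof of Theorem~\ref{thm:lower_mle} imposes $n>500$), and you explicitly address the non-attainment of the infimum by working along minimizing sequences under the normalization $\hat r(3)=0$.
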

The proof is deferred to Appendix~\ref{proof:fail_mle}. This suggests that $\hat r_{\mathsf{MLE}}$ also leads to the reward overoptimization phenomenon in the non-asymptotic regime. 
In  Section~\ref{sec:exp},  we conduct simulation in the exact same setting to verify the theoretical results.

\section{Methods:  Pessimistic MLE and Iterative Data Smoothing}

The problem of overfitting and overoptimization  calls for  a design of better and practical  reward learning algorithm that helps mitigate both issues. We first discuss the pessimistic MLE algorithm in~\cite{zhu2023principled}, which is shown to converge to a policy with vanishing sub-optimality under good coverage assumption.

\subsection{Pessimistic MLE}

 In the tabular case, the pessimistic MLE  corrects the original MLE by subtracting a confidence interval. Precisely, we have
 \begin{align}
     \hat r_{\mathsf{PE}}(a) = \hat r_{\mathsf{MLE}}(a) - \lambda\cdot \sqrt{\frac{1}{n}}, \label{eq:pessimism}
 \end{align}
where $n$ is the total number of samples and $\lambda = \|(L+\epsilon I )^{-1/2}_j\|_2$ is the norm of the $j$-th column of the matrix $(L+\epsilon I )^{-1/2}$, where $L$ is the matrix that satisfies   $L_{a, a} = n(a)/n$, $L_{a, a'} = -n(a, a')/n, \forall a\neq a'$, and $\epsilon$ is a small constant.  
Intuitively, for those arms that are compared fewer times, we are more uncertain about their ground-truth reward value. Pessimistic MLE penalizes these arms by directly subtracting the length of lower confidence interval of their reward, making sure that the arms that are less often compared will  be less likely to be chosen.  
It is shown in~\citet{zhu2023principled} that the sub-optimality of the policy optimizing $\hat r_{\mathsf{PE}}$ converges to zero under the following two conditions:
\begin{itemize}
    \item The expected number of times that one compares optimal arm (or the expert arm to be compared with in the definition of sub-optimality) is lower bounded by some positive constant $\mu(a^\star)\geq C$.
    \item The parameterized reward family lies in a bounded space $|\hat r(a) |\leq B, \forall a \in[K]$.
\end{itemize}

This indicates that  pessimistic MLE can help mitigate the reward overoptimization phenomenon. However, for real-world reward training paradigm, the neural network parameterized reward family may not be bounded. Furthermore, estimating the exact confidence interval for a neural-network parameterized model can be hard and costly. This prevents the practical use of pessimistic MLE, and calls for new methods that can potentially go beyond these conditions and apply to neural networks. 

  \subsection{Iterative Data Smoothing }
We propose a new algorithm, Iterative Data Smoothing (IDS), that shares similar insights as  pessimistic MLE. 
Intuitively,  pessimistic MLE helps mitigate the reward overoptimization issue by reducing the estimated reward for less seen arms. In IDS, we achieve this by updating the label of the data we train on.  

\begin{algorithm}[!htbp]
\caption{Iterative Data Smoothing ($\mathcal{D}, \theta_0, \alpha, \beta$)}
\label{alg:refine}
\begin{algorithmic}
\STATE \textbf{Input:} The pairwise comparison dataset $\mathcal{D} = \{a_i, a_i', y_i\}_{i=1}^n$. A parameterized reward model family $\{r_\theta: \mathcal{A} \mapsto \mathbb{R} \mid \theta\in\Theta\}$ with initialization $\theta_0\in\Theta$. Two step sizes $\alpha, \beta$. An empirical loss function $$\mathcal{L}_\theta(\{y_i\}, \mathcal{D})   =- \frac{1}{n}
\sum_{i=1}^n  y_i\cdot \log\left(\frac{\exp(  r_\theta(a_i))}{\exp( r_\theta(a_i)) + \exp( r_\theta(a_i'))}\right) + (1-y_i)\cdot \log\left(\frac{\exp(r_\theta(a_i'))}{\exp(r_\theta(a_i)) + \exp( r_\theta(a_i'))}\right) $$
\STATE Initialize $t=0$ and $y_{i, 0} = y_i, \forall i\in[n]$.
\WHILE{$r_{\theta_t}$ does not converge}{ \STATE \begin{align*}
        \theta_{t+1} & \gets \theta_t - \alpha \cdot \nabla \mathcal{L}_\theta(\{y_{i, t}\}, \mathcal{D})  \\
        y_{i, t+1} & \gets (1-\beta) \cdot y_{i, t} + \beta \cdot \frac{\exp(  r_{\theta_{t+1}}(a_i))}{\exp( r_{\theta_{t+1}}(a_i)) + \exp( r_{\theta_{t+1}}(a_i'))} \\
        t & \gets t+1
     \end{align*}
}
\ENDWHILE 
\STATE \textbf{Return:} $r_{\theta_{t}}$
\end{algorithmic}
\end{algorithm}

As is shown in Algorithm \ref{alg:refine}, we initialize $y_{i, 0}$ as the labels for the samples $y_i$. In the $t$-th epoch, we first update the model using the current comparison dataset with labels $\{y_{i, t}\}_{i=1}^n$. After the model is updated,  we also update the data using the model by predicting the probability  $\mathbb{P}(y_i = 1)$ for each comparison $(a_i, a_i')$ using the current reward estimate $\hat r_{\theta_t}$. We update each label $y_{i,t}$ as a weighted combination of its previous value and the new predicted probability. 

Intuitively, $y_{i, t}$ represents a proxy of the confidence level of labels predicted by interim model checkpoints. The idea is that as the model progresses through multiple epochs of training, it will bring larger change to rewards for frequently observed samples whose representation is covered well in the dataset. Meanwhile, for seldom-seen samples, the model will make minimal adjustments to the reward.

\subsubsection{Benefit of one-step gradient descent}

Before we analyze the IDS algorithm, we first discuss why training for one to two epochs in the traditional reward learning approach works well~\citep{ouyang2022training}.  We provide the following analysis of the one-step gradient update for the reward model. The proof is deferred to Appendix~\ref{proof:gd}.
\begin{theorem}\label{thm:gd}
    Consider the same multi-armed bandit setting where  the reward  is initialized equally for all $K$ arms. Then after one-step gradient descent, one has
    \begin{align*}
 \forall a, a'\in[K],  \hat r(a) - \hat r(a') = \alpha \cdot (n_+(a) - n_-(a) - (n_+(a') - n_-(a'))),
    \end{align*}
    where $n_+(a), n_-(a)$ refers to the total number of times that $a$ is preferred and not preferred, respectively.
\end{theorem}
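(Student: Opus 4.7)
The plan is to differentiate the cross-entropy loss at the uniform initialization and recognize the resulting gradient as a signed count of wins. First I would observe that when $r_{\theta_0}(a) \equiv c$ is constant in $a$, every BTL sigmoid $\exp(r(a_i))/(\exp(r(a_i))+\exp(r(a_i')))$ inside the loss collapses to $1/2$. This is the key simplification: it turns the gradient into a simple linear combination of indicators rather than an unwieldy function of softmax derivatives.

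Next I would compute $\partial \mathcal{L}/\partial r(a)$ for a fixed arm $a$. Writing
\[
\log\frac{\exp(r(a_i))}{\exp(r(a_i))+\exp(r(a_i'))} = r(a_i) - \log\bigl(\exp(r(a_i))+\exp(r(a_i'))\bigr),
\]
and similarly for the other branch, the per-sample contribution to $\partial \mathcal{L}/\partial r(a)$ evaluates, at $\sigma = 1/2$, to
\[
\tfrac{1}{2}(2y_i-1)\bigl(\mathbbm{1}[a=a_i] - \mathbbm{1}[a=a_i']\bigr).
\]
A short case analysis on the four combinations of $(y_i,\; a \in \{a_i,a_i'\})$ shows this equals $+\tfrac{1}{2}$ exactly when $a$ is the preferred arm in sample $i$ and $-\tfrac{1}{2}$ when it is the dispreferred arm. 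Summing over $i$ then collapses to $-\nabla_a \mathcal{L} = \tfrac{1}{2n}(n_+(a) - n_-(a))$.

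Finally I would substitute into the one-step gradient update $\hat r(a) = r_{\theta_0}(a) - \alpha \nabla_a \mathcal{L}$ and subtract the analogous expression for $a'$. The constant initialization $c$ cancels, yielding
\[
\hat r(a) - \hat r(a') \;=\; \tfrac{\alpha}{2n}\bigl[(n_+(a) - n_-(a)) - (n_+(a') - n_-(a'))\bigr],
\]
which matches the claimed identity once the scalar $1/(2n)$ is absorbed into the step-size constant $\alpha$.

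The main obstacle I anticipate is not analytic but bookkeeping: $y_i$ records whether $a_i$ is preferred, not whether the fixed arm $a$ under consideration is preferred, so one must carefully split into the cases $a = a_i$ and $a = a_i'$ and check that the signs of $(2y_i-1)$ combine with $\mathbbm{1}[a=a_i]-\mathbbm{1}[a=a_i']$ to produce the clean counts $n_+(a)$ and $n_-(a)$. Once the indicator algebra is handled, the remainder is elementary calculus.
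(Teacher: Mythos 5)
Your proposal is correct and follows essentially the same route as the paper's proof: evaluate the cross-entropy gradient at the uniform initialization, where every BTL sigmoid collapses to $1/2$, so the gradient at arm $a$ reduces to a scaled win--loss count $n_+(a)-n_-(a)$ and a single gradient step gives the stated difference (the paper's own proof is in fact looser than yours about the constant, silently dropping the $\tfrac{1}{2n}$ factor that you explicitly absorb into $\alpha$). One small caveat: your displayed per-sample term $\tfrac{1}{2}(2y_i-1)\bigl(\mathbbm{1}[a=a_i]-\mathbbm{1}[a=a_i']\bigr)$ is the contribution to $-\partial\mathcal{L}/\partial r(a)$ rather than to $\partial\mathcal{L}/\partial r(a)$ as written --- a sign typo that your final update equation silently corrects.
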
 
\begin{remark}
    The result shows that why early stopping in the traditional reward learning works well in a simple setting. After one gradient step, the empirical best arm becomes the  the arm whose absolute winning time is the largest. This can be viewed as another criterion besides pessimism that balances both the time of comparisons and the time of being chosen as the preferred arm. When the arm $a$ is only compared few times, the difference $n_+(a) - n_-(a)$ will be bounded by the total number of comparisons, which will be  smaller than those that have been compared much more times. Thus the reward model will penalize those arms seen less. After updating the label with the model prediction, the label of less seen samples will be closer to zero, thus getting implicitly penalized.
\end{remark}

\subsubsection{Benefit of iterative data smoothing}\label{sec:benefits_dr}
Due to under-optimization, the estimator from a one-step gradient update might still be far from  the ground-truth reward. 
We provide an analysis here  why IDS can be better. Consider any two arms $a, a'$ with $n(a, a')$ observations among $n$ total observations. By computing the gradient, we can write the IDS algorithm as
\begin{align*}
      \hat r_{t+1}(a) - \hat r_{t+1}(a') & =  \hat r_{t}(a) - \hat r_{t}(a') + \frac{\alpha\cdot n(a, a')}{n} \cdot \Big( ({\hat \mu(a\succ a')} \cdot y_t + \hat \mu(a \prec a') \cdot (1-y_t) ) \cdot  \frac{\exp(  \hat r_{ {t}}(a'))}{\exp( \hat r_{ {t}}(a)) + \exp( \hat r_{ {t}}(a'))} \\ 
    &  \qquad  - ({\hat \mu(a\prec a')} \cdot y_t + \hat \mu(a \succ a') \cdot (1-y_t) ) \cdot  \frac{\exp(\hat   r_{ {t}}(a))}{\exp( \hat r_{ {t}}(a)) + \exp( \hat r_{ {t}}(a'))} \Big)  \\
        y_{t+1} & =  (1-\beta) \cdot y_{t} + \beta \cdot \frac{\exp( \hat  r_{ {t+1}}(a))}{\exp( \hat  r_{ {t+1}}(a)) + \exp(\hat  r_{ {t+1}}(a'))},
     \end{align*}
where we define $\hat\mu(a\succ a') = n(a\succ a') / n(a, a')$. One can see that the effective step size for updating $\hat r$ is $\alpha \cdot n(a, a') /n$, while the effective step size for updating $y$ is $\beta$. Assume that we choose $\alpha, \beta, l, m$ such that
\begin{align*}
    \alpha\cdot l/n \ll \beta \ll \alpha\cdot m /n.
\end{align*}
Consider the following two scales:
\begin{itemize}
    \item When there are sufficient observations, $n(a, a')\geq m$, we know that $\beta \ll \alpha\cdot n(a, a') /n$. In this case, the update step size of $y_t$ is much slower than $\hat r_t$. One can approximately take $y_t \approx 0$ or $1$ as unchanged during the update. Furthermore, since $n(a, a')\geq m$ is large enough, $\hat \mu$ concentrates around the ground truth $\mu$.  In this case, one can see that the reward converges to the ground truth reward $ \hat r_t \rightarrow r^\star$.
    \item When the number of observations is not large, i.e., $n(a, a')\leq l$,  we know that $ \alpha\cdot l/n \ll \beta$. In this case,  the update of $\hat r$ is much slower than $y_t$. When the $\hat r_0$ are initialized to be zero, $y_t$ will first converge to $1/2$, leading to $\hat r_t(a)\approx \hat r_t(a')$ when $t$ is large.
\end{itemize}
To formalize the above argument, we consider the following differential equations:
\begin{align}
    \dot{d}(t) & = \alpha n \cdot \left((\mu\cdot y(t) + (1-\mu)\cdot (1-y(t)))\cdot \frac{1}{1+\exp(d(t))} - ((1-\mu)\cdot y(t) + \mu\cdot (1-y(t)))\cdot \frac{\exp(d(t))}{1+\exp(d(t))}\right) \nonumber \\ 
    \dot{y}(t) & =  \beta \cdot \left(\frac{\exp(d(t))}{1+\exp(d(t))} - y(t)\right) \label{eq:diff}.
\end{align}
Here $d$ represents the difference of reward between two arms $a, a'$, and $\mu$ represents the empirical frequency $\hat \mu(a\succ a')$. Let the initialization be $d(0)=0, y(0) = 1$.
We have the following theorem.
\begin{theorem}\label{thm:differential}
    The  differential equations  in Equation \eqref{eq:diff} have one unique stationary point $d(t) = 0, y(t) = \frac{1}{2}$. On the other hand, for any $\alpha, \beta, n, T$ with  $\beta T \leq \epsilon \ll 1 \ll \alpha n T$, one has
    \begin{align*}
     \left|\frac{\exp(d(T))}{1+\exp(d(T))} - \mu\right|  &  \leq \max(2(1-\exp(-\epsilon)),  \exp(-\mu(1-\mu)\alpha n T) ) \\ 
        y(T)& \geq \exp(-\epsilon).
    \end{align*}
\end{theorem}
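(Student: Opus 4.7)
The plan is to handle the three claims separately: identify the unique stationary point by algebraic reduction, bound $y(T)$ from below via a logarithmic differential inequality, and control the deviation of $\sigma(t) := e^{d(t)}/(1+e^{d(t)})$ from $\mu$ by a linear ODE comparison exploiting the fast--slow structure ($\beta T \ll 1 \ll \alpha n T$).

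\textbf{Stationary point and lower bound on $y$.} Setting $\dot y = 0$ forces $y = \sigma(d)$, which I substitute into $\dot d = 0$; after a short algebraic reduction, the condition collapses to $(1-\mu)\bigl[(1-\sigma)^2 - \sigma^2\bigr] = 0$, forcing $\sigma = 1/2$, hence $d = 0$ and $y = 1/2$. For $y(T)$, note $y(0) = 1 > 0$ and $\dot y = \beta(\sigma - y) \geq -\beta y$ (since $\sigma \geq 0$), so $y(t) > 0$ throughout. Then $\frac{d}{dt}\log y = \beta(\sigma/y - 1) \geq -\beta$, which upon integration over $[0,T]$ yields $\log y(T) \geq -\beta T \geq -\epsilon$, i.e. $y(T) \geq e^{-\epsilon}$.

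\textbf{Bound on $\sigma(T)$.} Using the identity $\bigl[\mu y + (1-\mu)(1-y)\bigr] + \bigl[(1-\mu)y + \mu(1-y)\bigr] = 1$, the $d$-equation simplifies to $\dot d = \alpha n\,[\sigma^\ast(y) - \sigma]$, where $\sigma^\ast(y) := (1-\mu) + (2\mu-1)y$ with $\sigma^\ast(1) = \mu$. Converting to the sigmoid coordinate, $\dot\sigma = \alpha n\,\sigma(1-\sigma)\,[\sigma^\ast(y) - \sigma]$, and the moving equilibrium satisfies $|\sigma^\ast(y) - \mu| = |2\mu - 1|(1-y) \leq 1 - e^{-\epsilon}$ by the previous step. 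Setting $u := \sigma - \mu$, I rewrite the dynamics as the linear ODE $\dot u = -g(t)\,u + f(t)$ with $g(t) := \alpha n\,\sigma(1-\sigma)$ and $|f(t)| \leq g(t)\,(1 - e^{-\epsilon})$. Variation of constants then yields
\[
|u(T)| \;\leq\; e^{-\int_0^T g(s)\,ds}\,|u(0)| \;+\; (1 - e^{-\epsilon})\bigl(1 - e^{-\int_0^T g(s)\,ds}\bigr),
\]
and combining the two summands via $a + b \leq 2\max(a,b)$ produces the stated $\max$ bound.

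\textbf{Main obstacle.} The delicate step is showing $\int_0^T g(s)\,ds \geq \mu(1-\mu)\,\alpha n T$, because $\sigma(1-\sigma)$ degenerates at the boundary $\{0,1\}$. I will handle this by an invariance argument: the interval with endpoints $1/2$ and $\mu$, enlarged slightly by the forcing $1-e^{-\epsilon}$, is forward-invariant, since whenever $\sigma$ reaches the far boundary the sign of $\dot\sigma$ points inward (the forcing $\sigma^\ast(y) - \mu$ is too small to overpower the restoring term $-(\sigma - \mu)$). On this invariant interval, $\sigma(1-\sigma) \geq \mu(1-\mu)$ uniformly, so the Gr\"onwall integral is controlled and the ODE estimate closes. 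Once this pointwise lower bound on $g$ is secured, the rest of the argument is routine calculation.
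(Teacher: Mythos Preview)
Your proposal is correct and reaches exactly the stated bound, but the mechanics differ from the paper's. The paper works with the Lyapunov function $V(t)=(\sigma(t)-\mu)^2$, derives the differential inequality $\dot V \leq 2\alpha n\,\sigma(1-\sigma)\bigl(-V + 1-e^{-\epsilon}\bigr)$, and then performs a case split: either $V$ stays above $2(1-e^{-\epsilon})$ on all of $[0,T]$ (in which case $V$ is monotone and Gr\"onwall gives exponential decay), or it dips below at some time (and a continuity/monotonicity argument traps $V(T)$ under the threshold). You instead pass to the sigmoid coordinate, recognize $\dot u=-g(t)u+f(t)$ as a forced linear ODE, and apply variation of constants; this handles both regimes at once without a case split and bounds $|u|$ directly rather than $u^2$. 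Both arguments ultimately hinge on the same inequality $\sigma(1-\sigma)\geq \mu(1-\mu)$ on the trajectory, which the paper extracts from the monotonicity of $V$ and you get from forward invariance.

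One imprecision to tighten: you do \emph{not} need to enlarge the interval, and in fact enlarging it would break your last step. Since $\sigma^\ast(y)=(1-\mu)+(2\mu-1)y$ is an affine interpolation with $\sigma^\ast(1/2)=1/2$ and $\sigma^\ast(1)=\mu$, and since $y\in[e^{-\epsilon},1]\subset[\tfrac12,1]$ whenever $\epsilon<\ln 2$, the target $\sigma^\ast(y)$ always lies in the closed interval with endpoints $\tfrac12$ and $\mu$. Hence at either boundary of that interval $\dot\sigma$ points weakly inward, so the exact interval $[\min(\tfrac12,\mu),\max(\tfrac12,\mu)]$ is forward-invariant and the pointwise bound $\sigma(1-\sigma)\geq \mu(1-\mu)$ holds there. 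If you enlarged past $\mu$ (say for $\mu>\tfrac12$), then on the overshoot $\sigma(1-\sigma)<\mu(1-\mu)$ and your claimed uniform lower bound on $g$ would fail. So drop the enlargement and the argument closes cleanly.
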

The proof is deferred to Appendix~\ref{proof:differential}. Note that the above argument only proves convergence to the empirical measure $\mu$. One can combine standard concentration argument to prove the convergence to the ground truth probability. The result shows that when choosing $\alpha, \beta$ carefully, for the pair of arms with a large number of comparisons, the difference of reward  will be close to the ground truth during the process of training. As a concrete example, by taking $\alpha =  {n}^{-1/2}, \beta = n^{-1}T^{-2}, \epsilon = \beta T$, we have 
  \begin{align*}
     \left|\frac{\exp(d(T))}{1+\exp(d(T))} - \mu\right|  %&  \leq \max(2(1-\exp(-\epsilon)),  \exp(-\mu(1-\mu)\alpha n T)) \\ 
    % & \leq \max(2\epsilon,  \exp(-\mu(1-\mu)n^{1/2} T)) \\
     & \leq \max(2n^{-1}T^{-1},  \exp(-\mu(1-\mu)n^{1/2} T)). 
    \end{align*}

 One can see that for those pairs of comparisons with a large sample size $n$, the estimated probability is close to the ground truth probability. On the other hand, for those pairs  that are compared less often, the difference $d(t)$ is updated less frequently and remains close to the initialized values. Thus the algorithm implicitly penalizes the less frequently seen pairs, while still estimating the commonly seen pairs accurately.
% And for the pair of arms with small amount of comparisons, the difference of reward will converge to $0$. 

In summary, the IDS algorithm enjoys several benefits:
\begin{itemize}
    \item For a sufficient number of observations, the estimated reward approximately converges to the ground truth reward; while for an insufficient number of observations, the estimated reward remains largely unchanged at the initialization. Thus the reward model penalizes the less observed arms with higher uncertainty.  
    \item It is easy to combine with neural networks, allowing arbitrary parametrization of the reward model.
    \item It utilizes the soft labels starting from the second epoch, which can be more effective than hard labels according to the literature on knowledge distillation~\citep{hinton2015distilling, zhao2023towards}. 
\end{itemize}

We also present an alternative formulation of IDS in Appendix~\ref{app:alternative}.

  \begin{figure}[!htbp]
     \centering
     \begin{subfigure}[b]{0.45\linewidth}
         \centering
         \includegraphics[width=\textwidth]{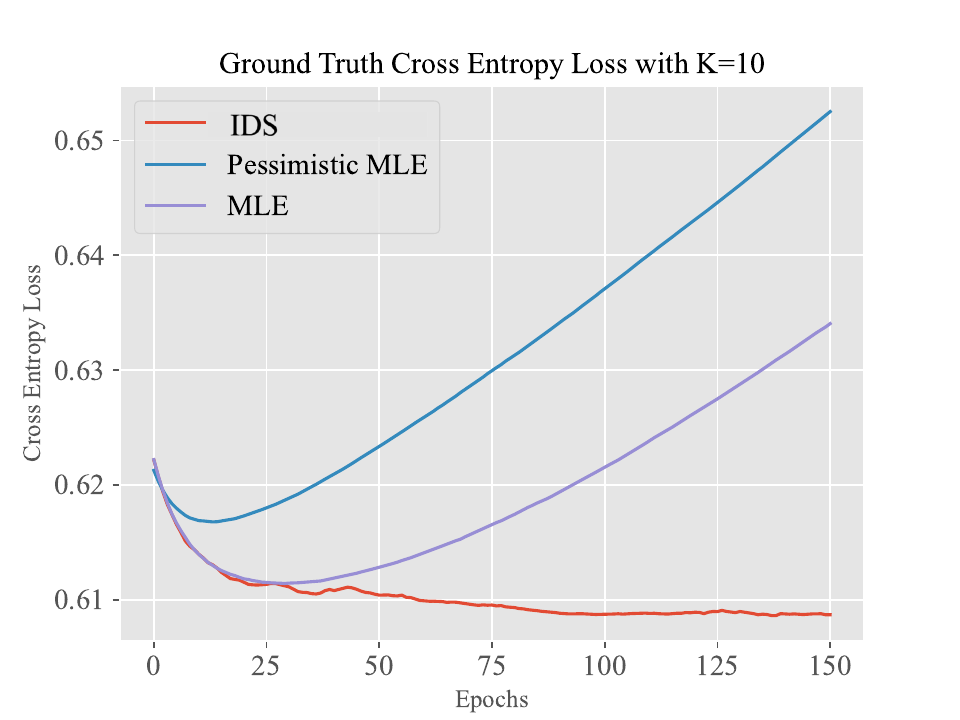}
         %\caption{$y=x$}
     \end{subfigure}
     \hfill
     \begin{subfigure}[b]{0.45\linewidth}
         \centering
         \includegraphics[width=\textwidth]{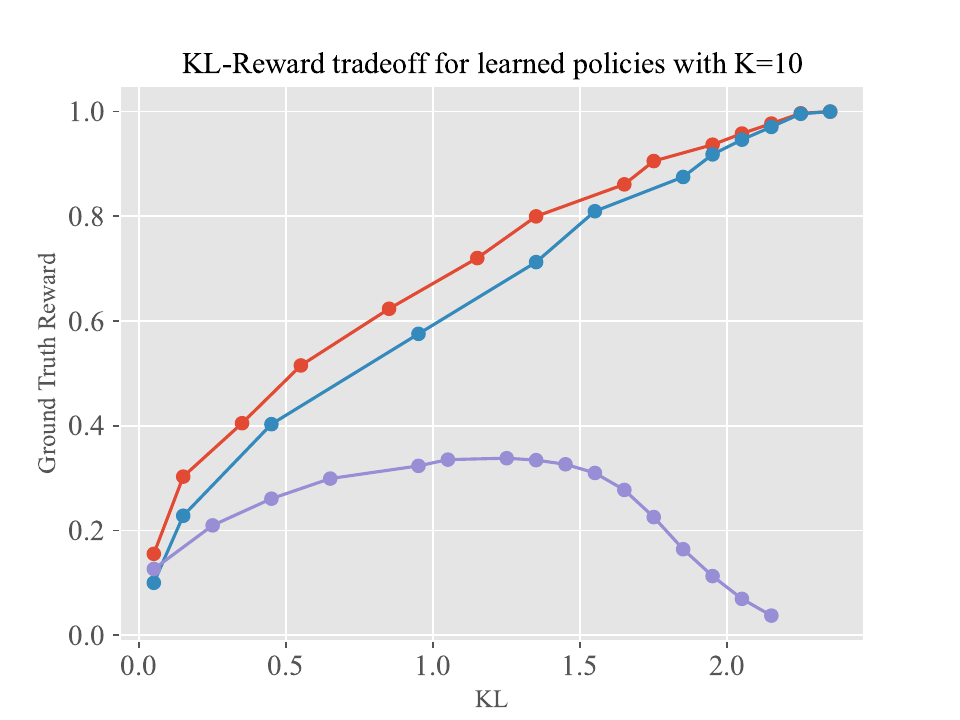}
         %\caption{$y=3\sin x$}
     \end{subfigure}
     \begin{subfigure}[b]{0.45\linewidth}
         \centering
         \includegraphics[width=\textwidth]{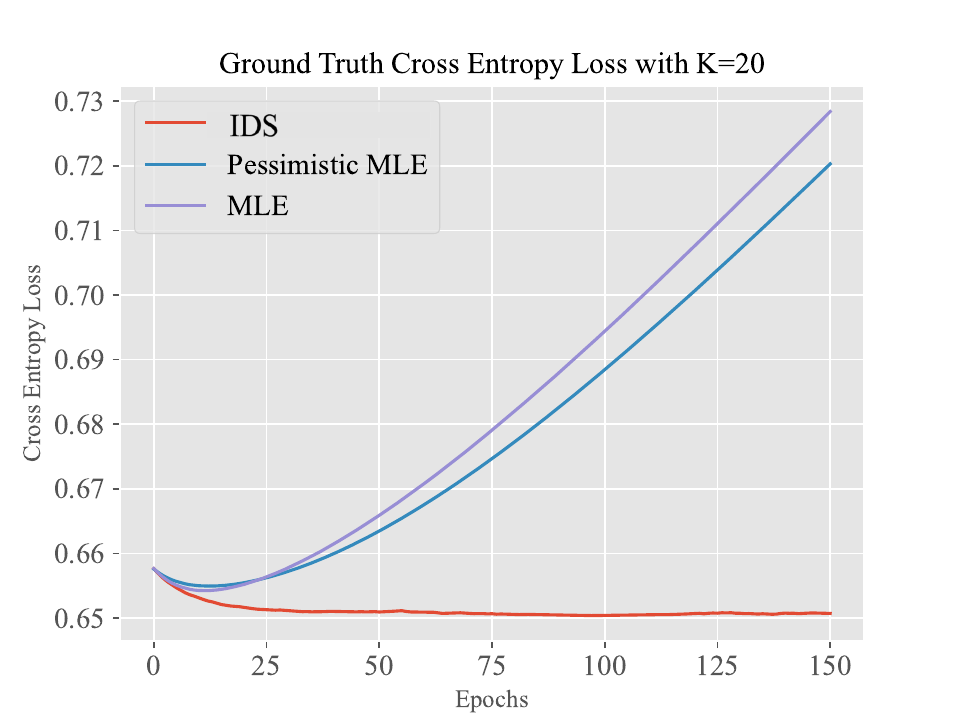}
         %\caption{$y=x$}
     \end{subfigure}
     \hfill
     \begin{subfigure}[b]{0.45\linewidth}
         \centering
         \includegraphics[width=\textwidth]{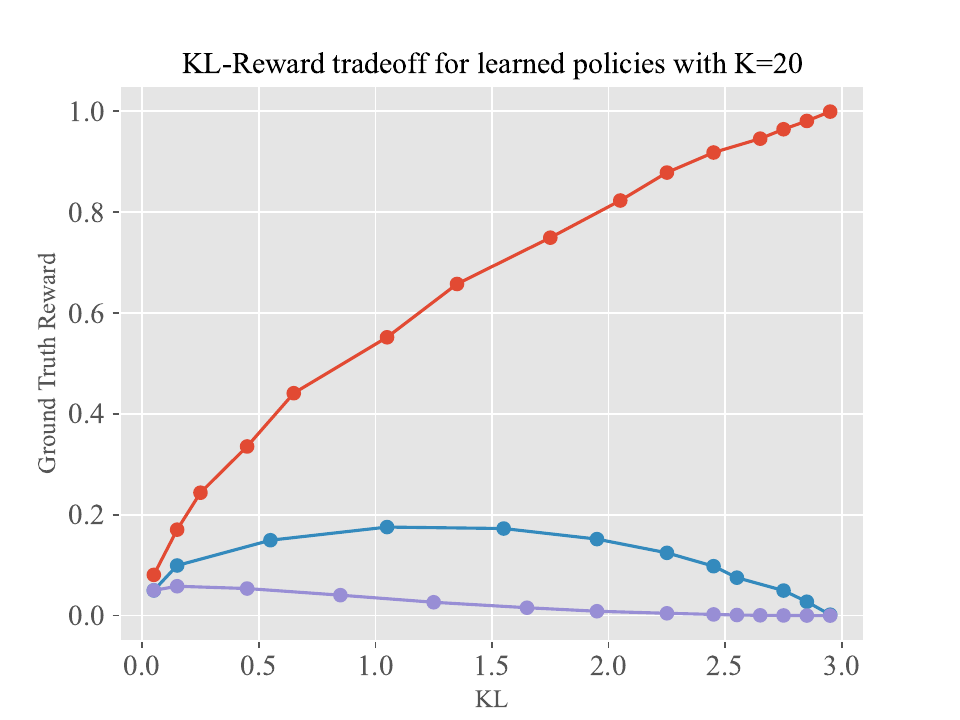}
         %\caption{$y=3\sin x$}
     \end{subfigure}
    \caption{Comparisons of the three methods  in the multi-armed bandit setting. }
    \label{fig:bandit}
\end{figure}

 \section{Experiments}\label{sec:exp}

 In this section, we present the results of experiments with both  multi-armed bandits and neural networks.
\subsection{Multi-Armed Bandit}\label{sec:exp_bandit}

In the bandit setting, we focus on the hard example constructed in Theorem~\ref{thm:lower_mle}. We take total samples $n=60$ and the number of arms $K$ as $10$ and $20$. We compare the performance of the vanilla MLE, pessimistic MLE and IDS in both the reward learning phase and the policy learning phase. 

In the reward learning phase, we run stochastic gradient descent with learning rate $0.01$ on the reward model for multiple epochs and monitor how the loss changes with respect to the number of training epochs. For pessimistic MLE, we subtract the confidence level in the reward according to Equation (\ref{eq:pessimism}). For IDS, we take the two step sizes as $\alpha = 0.01, \beta = 0.001$.
As is shown in left part of Figure~\ref{fig:bandit}, both MLE and pessimistic MLE suffer from reward overfitting, while the test cross-entropy loss for  the IDS algorithm continues to decrease until convergence. Since the training loss changes with the updated labels, we   plot the population cross-entropy loss which is averaged over all pairs of comparisons.

In the right part of the figure, we plot the KL-reward tradeoff when training a policy based on the learned reward. We vary the choice of $\lambda$  in Equation (\ref{eq:pi_lambda}) to derive the optimal policy under diverse levels of KL constraint, where we take the reference policy $\pi_0$  as the uniform policy.  One can see that IDS is able to converge to the optimal reward when KL is large, while both MLE and pessimistic MLE suffer from overoptimization.

We remark here that the reason  pessimistic MLE suffers from both overfitting and overoptimization might be due to the design of unbounded reward in the multi-armed bandit case. When the reward family is bounded, pessimistic MLE is also guaranteed to mitigate the overoptimization issue. Furthermore, we only run one random seed for this setting to keep the plot clean since the KL-reward trade-off heavily depends on the observed samples. % We provide the results for more random seeds and diverse choices of $K$s in Appendix~\ref{app:exp}. 

\begin{figure}[htbp]
     \centering
     \begin{subfigure}[b]{0.49\linewidth}
         \centering
         \includegraphics[width=\textwidth]{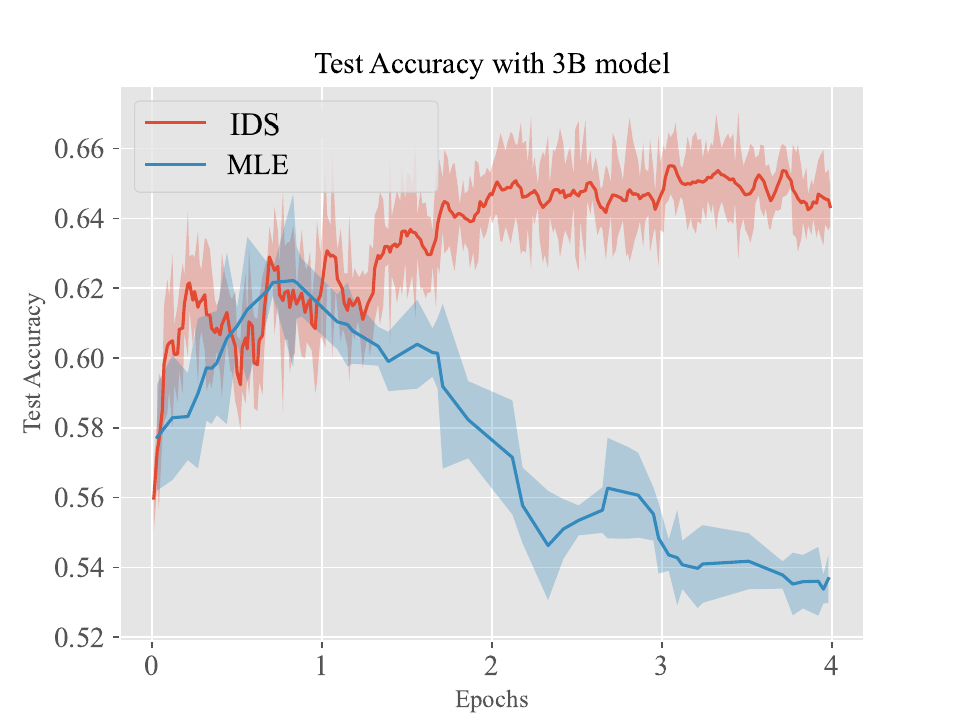}
         %\caption{$y=x$}
     \end{subfigure}
     \hfill
     \begin{subfigure}[b]{0.49\linewidth}
         \centering
         \includegraphics[width=\textwidth]{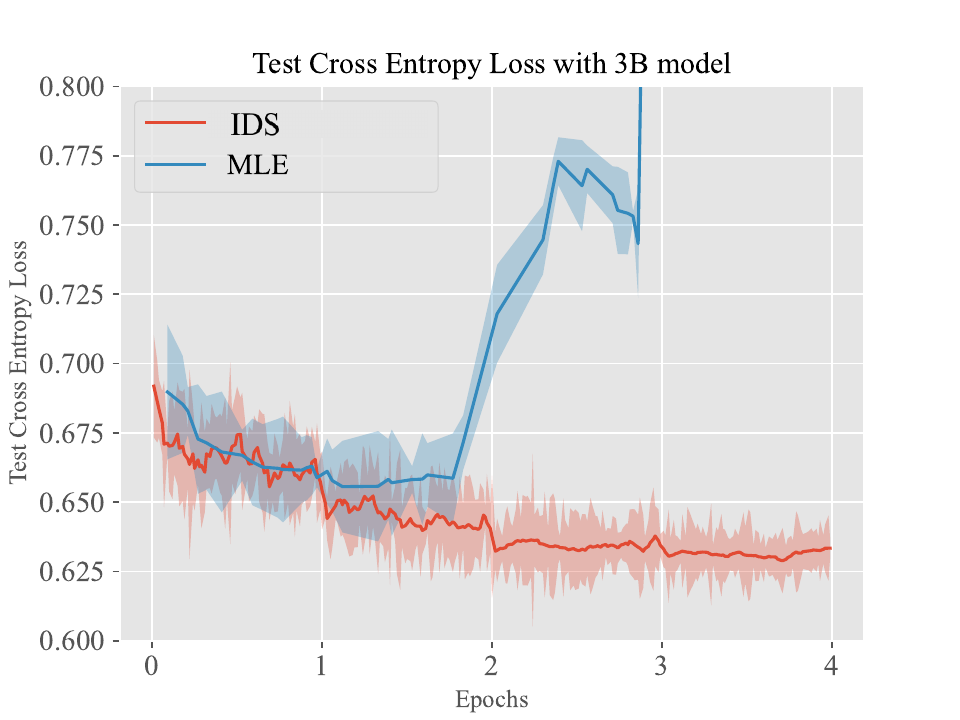}
         %\caption{$y=3\sin x$}
     \end{subfigure}
     \begin{subfigure}[b]{0.49\linewidth}
         \centering
         \includegraphics[width=\textwidth]{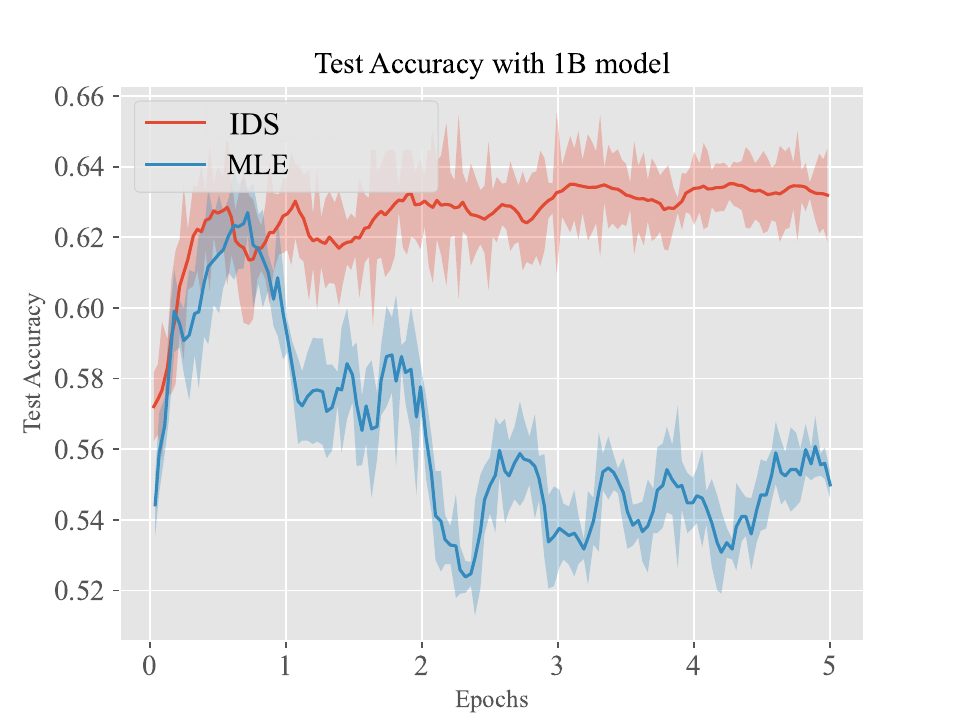}
         %\caption{$y=x$}
     \end{subfigure}
     \hfill
     \begin{subfigure}[b]{0.49\linewidth}
         \centering
         \includegraphics[width=\textwidth]{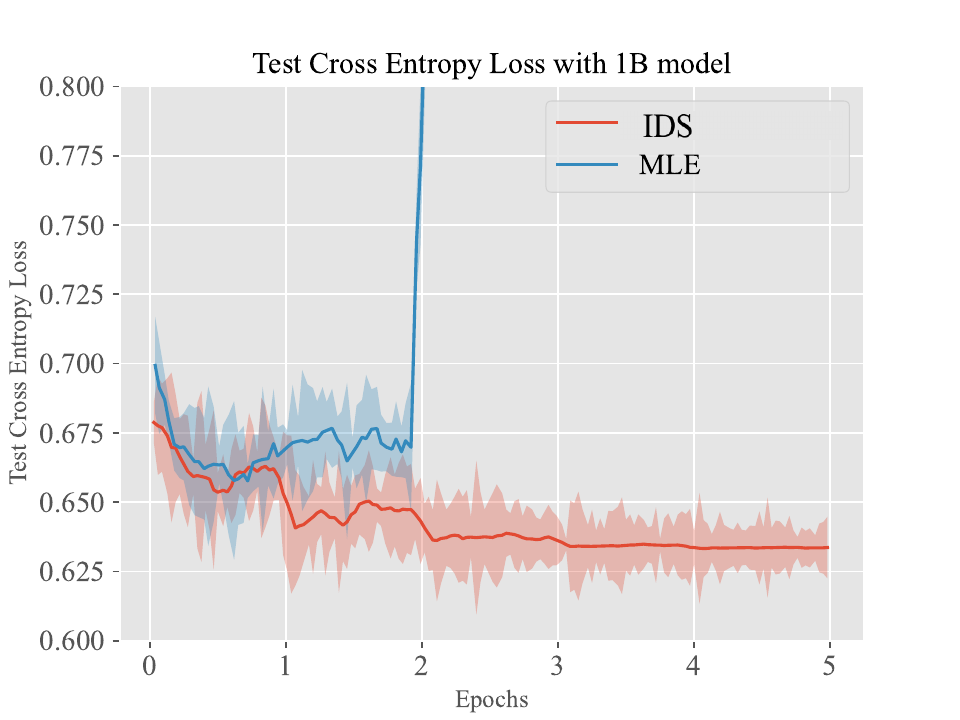}
         %\caption{$y=3\sin x$}
     \end{subfigure}
      \begin{subfigure}[b]{0.49\linewidth}
         \centering
         \includegraphics[width=\textwidth]{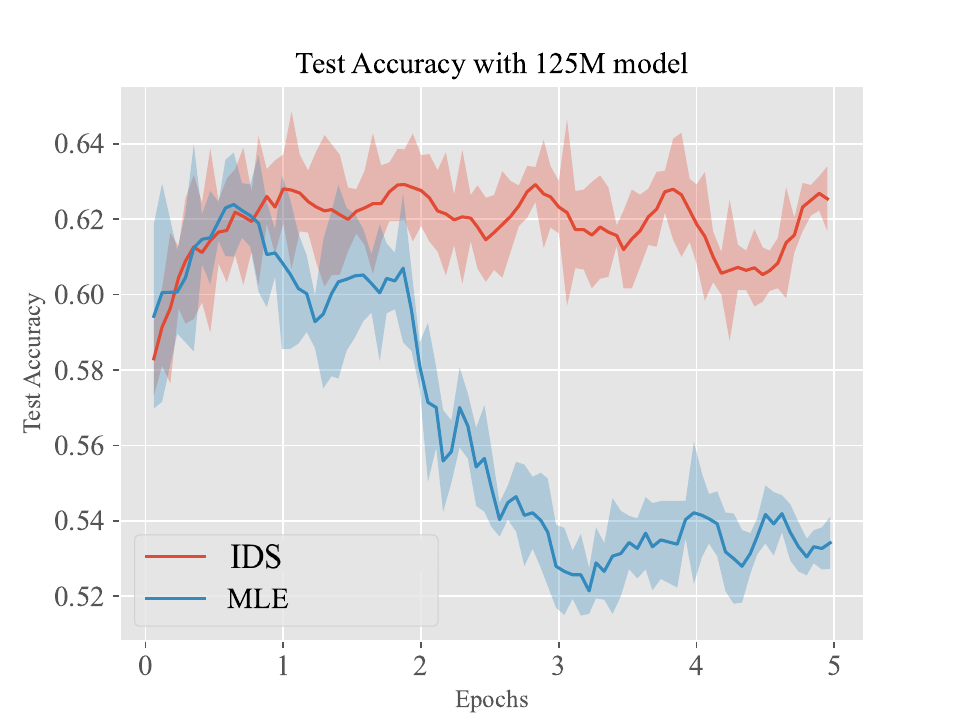}
         %\caption{$y=x$}
     \end{subfigure}
     \hfill
     \begin{subfigure}[b]{0.49\linewidth}
         \centering
         \includegraphics[width=\textwidth]{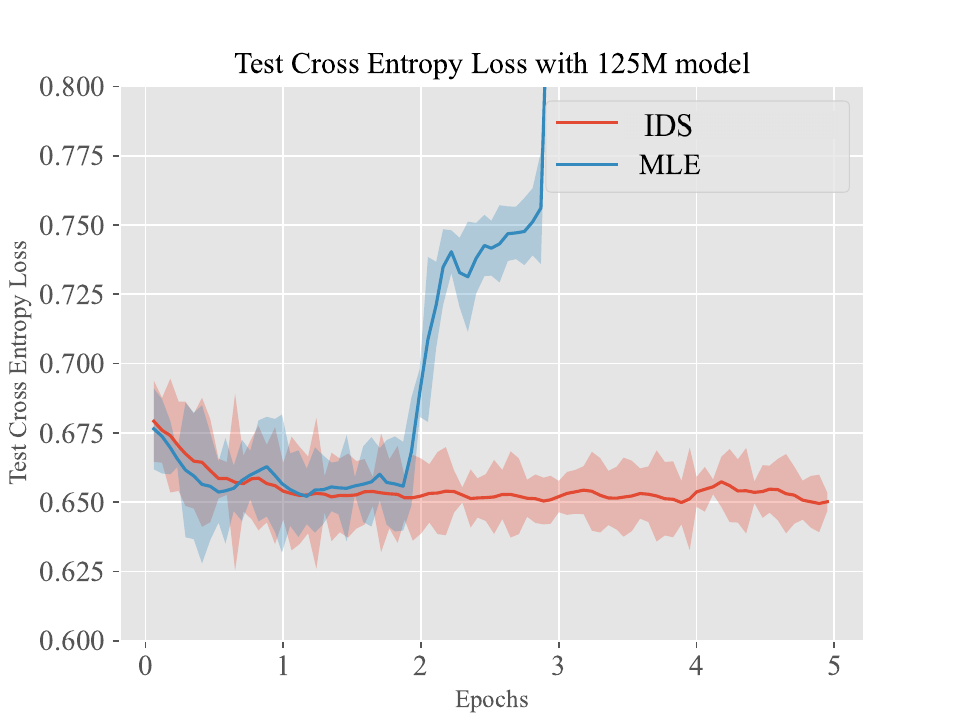}
         %\caption{$y=3\sin x$}
     \end{subfigure}
    \caption{Comparisons of MLE and IDS  when the reward is parameterized by a neural network. }
    \label{fig:reward}
\end{figure}

 \subsection{Neural Network}\label{sec:exp_nn}
 
We also conduct experiments with neural networks. We use 
 the human-labeled Helpfulness and Harmlessnes (HH) dataset from~\citet{bai2022training}.\footnote{\url{https://huggingface.co/datasets/Dahoas/static-hh}} We take  \texttt{Dahoas/pythia-125M-static-sft}\footnote{\url{https://huggingface.co/Dahoas/pythia-125M-static-sft}} as the policy model with three different reward models of size 125M, 1B and 3B. When training reward model, we take a supervised fine-tuned language model, remove the last layer and replace it with a linear layer.  When fine-tuning the language model, we use the proximal policy optimization (PPO) algorithm~\citep{schulman2017proximal}.

We take a fully-trained 6B reward model  \texttt{Dahoas/gptj-rm-static} trained from the same dataset based on \texttt{EleutherAI/gpt-j-6b} as the ground truth. We use the model to label the comparison samples using the BTL model~\citep{bradley1952rank}. And we train the 125M, 1B and 3B reward model with the new labeled comparison samples.  The reward training results are shown in Figure~\ref{fig:reward}. One can see that the   MLE begins to overfit after 1-2 epochs, while the loss of the IDS algorithm continues to decrease  stably until convergence. 

For both MLE and IDS algortihms, we take the reward with the smallest evaluation loss and optimize a policy against the selected reward model. We compare results for policy learning as shown in  Figure~\ref{fig:policy}. One can see that MLE  suffers from reward overoptimization with few thousand steps, while the ground truth reward continues to grow when using our IDS algorithm. We select step sizes $\alpha =10^{-5}$ and $\beta = 0.7$ for all experiments. We observe that larger model leads to more improvement after one epoch, potentially due to more accurate estimation of the labels.  We provide more details of the experiment along with the experiments on a different dataset, TLDR, in Appendix~\ref{app:exp}. 
\begin{figure}[!htbp]
    \centering
    \includegraphics[width=0.5\textwidth]{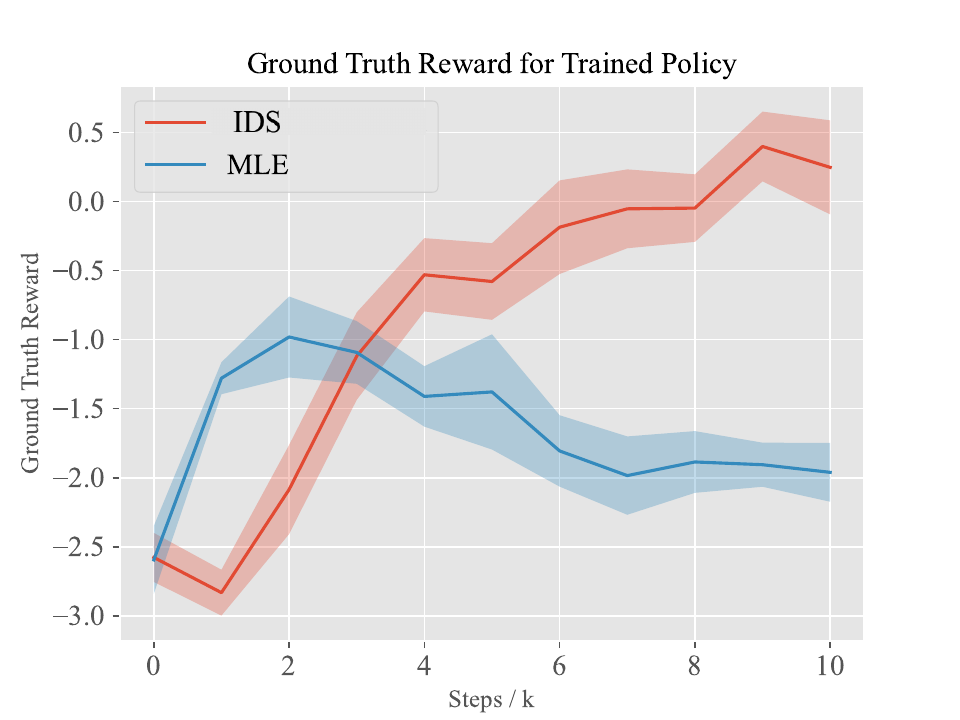}
    \caption{Comparison of MLE and IDS for policy learning}
    \label{fig:policy}
\end{figure}

% We smooth and average over 3 seeds for all the plot. We also provide reward training curve of one seed without smoothing and averaging in Appendix~\ref{app:exp}. 

In the implementation, we find that it is helpful to  restore  the best checkpoint at the end of each epoch. This is due to that an inappropriate label $\{y_i\}_{i=1}^n$ at certain epoch may hurt the performance of the model. To prevent overfitting to the test set, we choose a large validation and test dataset, and we select the best checkpoint according to the smallest loss in the validation set, and plot the loss on the test set. During the whole training procedure including checkpoint restoration, we do not use any of the sample in the test set.

\section{Conclusions
}
We have presented analyses and methodology aimed at resolving the problems of overfitting and overoptimization in reward training for RLHF. We show that our proposed algorithm, IDS, helps mitigate these two issues in both the multi-armed bandit and neural network settings. Note that while we identify the underlying source of reward overfitting and overoptimization as the variance of the human preference data, it is also possible that bias also contributes to these phenomena.
In future work, plan to pursue further formal theoretical analysis of the IDS algorithm, and explore potential applications beyond reward training in the generic domains of classification and prediction.  
 
\section*{Acknowledgements}

The authors would like to thank John Schulman for discussions and suggestions throughout the projects, which initiated the idea of reproducing reward overfitting and overoptimization in the bandit setting, and inspired the idea of combining soft labels during training for better training. The authors would also like to thank Lester Mackey for helpful discussions. The work is supported by NSF Cloudbank and the Mathematical Data Science program of the
Office of Naval Research under grant number N00014-21-1-2840.

\newpage
\bibliography{ref}
\newpage 
\appendix
\section{Extension to Multi-wise Comparison}\label{app:extension}
Here we discuss potential extensions from pairwise comparisons to multi-wise comparison. When there is $M$ ranked responses for each prompt, there are two losses that one can choose from, namely $\mathsf{MLE}_2$ and $\mathsf{MLE}_M$ from~\citet{zhu2023principled}.
\begin{align*}
\hat \theta_{\mathsf{MLE}_2} & \in \argmin_{r}    \mathcal{L}_2(\mathcal{D}, r), \nonumber \\ 
\text{where } \mathcal{L}_2(\mathcal{D}, r) & = - \frac{1}{n}\sum_{i=1}^n \sum_{j=1}^{M } \sum_{k=j+1}^{M } \log \left(\frac{\exp( r (s_i, a_i^{(\sigma_i(j))}) )}{\exp(r (s_i, a_i^{(\sigma_i(j))}) )+\exp(r (s_i, a_i^{(\sigma_i(k))}))}\right) \nonumber \\
\hat \theta_{\mathsf{MLE}_M} & \in \argmin_{r}    \mathcal{L}_M(\mathcal{D}, r), \nonumber\\ 
\text{where } \mathcal{L}_M(\mathcal{D}, r) & = - \frac{1}{n}\sum_{i=1}^n \sum_{j=1}^{M }  \log \left(\frac{\exp(r (s_i, a_i^{(\sigma_i(j))})  )}{\sum_{k=j}^{M} \exp(r (s_i, a_i^{(\sigma_i(k))}) )}\right).\nonumber 
\end{align*}

Here we discuss how to incorporate the iterative data smoothing algorithm for the two losses above. 

The loss $\mathsf{MLE}_2$ splits the $M$-wise comparisons into pairwise comparisons, thus it is straightforward to predict  the new label  $y_i^{j, k}$ for each pair of the comparisons between $j$-th and $k$-th response. The loss used for iterative data smoothing can be written as 
\begin{align*}
    \mathcal{L}_2^{\mathsf{DR}}(\mathcal{D}, r) & = - \frac{1}{n}\sum_{i=1}^n \sum_{j=1}^{M } \sum_{k=j+1}^{M } y_{i}^{\sigma_i(j), \sigma_i(k)}\log \left(\frac{\exp( r (s_i, a_i^{(\sigma_i(j))}) )}{\exp(r (s_i, a_i^{(\sigma_i(j))}) )+\exp(r (s_i, a_i^{(\sigma_i(k))}))}\right) \\ 
    & \qquad + (1-y_{i}^{\sigma_i(j), \sigma_i(k)})\log \left(\frac{\exp( r (s_i, a_i^{(\sigma_i(k))}) )}{\exp(r (s_i, a_i^{(\sigma_i(j))}) )+\exp(r (s_i, a_i^{(\sigma_i(k))}))}\right).
\end{align*} 
\begin{align*}
    y_{i, t+1}^{j, k} = (1-\beta)\cdot y_{i, t}^{j, k}  + \beta\cdot  \frac{\exp( r_{\theta_{t+1}} (s_i, a_i^{j}) )}{\exp(r_{\theta_{t+1}} (s_i, a_i^{j}) )+\exp(r_{\theta_{t+1}} (s_i, a_i^{k}))}.
\end{align*}

On the other hand, adapting the loss $\mathsf{MLE}_M$ for iterative data smoothing requires more efforts since it requires changing the ranking labels to soft labels. The design of $\mathsf{MLE}_M$ decomposes the probability of the observed ranking to the product of the probability that each response is the most preferred one among the rest of the responses.  One of the options is to directly change the labels for the current rankings by the following update rules:
\begin{align*}
    \mathcal{L}_M(\mathcal{D}, r) & = - \frac{1}{n}\sum_{i=1}^n \sum_{j=1}^{M } y_i^{\sigma_i(j)} \log \left(\frac{\exp(r (s_i, a_i^{(\sigma_i(j))})  )}{\sum_{k=j}^{M} \exp(r (s_i, a_i^{(\sigma_i(k))}) )}\right).  
\end{align*}
And the update rule for the labels $y_i$ is
\begin{align*}
    y_{i, t+1}^{\sigma_i(j)} = (1-\beta)\cdot y_{i, t}^{\sigma_i(j)}  + \beta\cdot  \frac{\exp( r_{\theta_{t+1}} (s_i, a_i^{(\sigma_i(j))}) )}{\sum_{k=j}^{M} \exp(r_{\theta_{t+1}} (s_i, a_i^{(\sigma_i(k))}) )}.
\end{align*}
However, the above update method does not directly reduce to the the case of pairwise comparisons when setting $M=2$. In order to recover the pairwise loss, one needs to consider all possible rankings and get the soft labels for all the rankings. The loss will become
\begin{align*}
    \mathcal{L}_M'(\mathcal{D}, r) & = - \frac{1}{n}\sum_{i=1}^n \sum_{\sigma\in\Pi(M)}\sum_{j=1}^{M } y_i^{j,\sigma} \log \left(\frac{\exp(r (s_i, a_i^{(\sigma(j))})  )}{\sum_{k=j}^{M} \exp(r (s_i, a_i^{(\sigma(k))}) )}\right).  
\end{align*}
Here $\Pi(M)$ is the set of all permutations of the $M$ elements. And the label is initialized as $y_{i, 0}^{j, \sigma} = 1$ if $\sigma  = \sigma_i$, and $0$ otherwise. 
And the update rule for the labels $y_i$ is
\begin{align*}
    y_{i, t+1}^{j, \sigma} = (1-\beta)\cdot y_{i, t}^{j, \sigma}  + \beta\cdot  \frac{\exp( r_{\theta_{t+1}} (s_i, a_i^{(\sigma(j))}) )}{\sum_{k=j}^{M} \exp(r_{\theta_{t+1}} (s_i, a_i^{(\sigma(k))}) )}.
\end{align*}
The loss is consistent with the pairwise cross entropy loss when $M=2$. However, it requires enumerating over all possible permutations, which are not very efficient when $M$ is large. It requires more study to decide which loss is more appropriate for $M$-wise iterative data smoothing.

\section{An Alternative Formulation of Iterative Data Smoothing}\label{app:alternative}

Besides the formulation shown in Algorithm~\ref{alg:refine}, we also propose an alternative formulation that directly multiplies a confidence $c_i$ in front of the original loss for each sample, as is shown in Algorithm~\ref{alg:refine_v2}. We note here that although the algorithm has better asymptotic convergence result, its  performance in practice is not as good as Algorithm~\ref{alg:refine}.

\begin{algorithm}[!htbp]
\caption{Iterative Data Smoothing V2 ($\mathcal{D}, \theta_0, \alpha, \beta$)}
\label{alg:refine_v2}
\begin{algorithmic}
\STATE \textbf{Input:} The pairwise comparison dataset $\mathcal{D} = \{a_i, a_i', y_i\}_{i=1}^n$. A parameterized reward model family $\{r_\theta: \mathcal{A} \mapsto \mathbb{R} \mid \theta\in\Theta\}$ with initialization $\theta_0\in\Theta$. Two step sizes $\alpha, \beta$. An empirical loss function $$\mathcal{L}_\theta(\{c_i\}, \mathcal{D})   =- \frac{1}{n}
\sum_{i=1}^n \max(2c_i-1, 0)\cdot \left( y_i\cdot \log\left(\frac{\exp(  r_\theta(a_i))}{\exp( r_\theta(a_i)) + \exp( r_\theta(a_i'))}\right) + (1-y_i)\cdot \log\left(\frac{\exp(r_\theta(a_i'))}{\exp(r_\theta(a_i)) + \exp( r_\theta(a_i'))}\right)\right)$$
\STATE Initialize $t=0$ and $c_{i, 0} = 1, \forall i$.
\WHILE{$r_{\theta_t}$ does not converge}{ \STATE \begin{align*}
        \theta_{t+1} & \gets \theta_t - \alpha \cdot \nabla \mathcal{L}_\theta(\{c_{i, t}\}, \mathcal{D})  \\
        c_{i, t+1} & \gets (1-\beta) \cdot c_{i, t} + \beta \cdot \frac{\exp(  r_{\theta_{t+1}}(a_i))}{\exp( r_{\theta_{t+1}}(a_i)) + \exp( r_{\theta_{t+1}}(a_i'))} \\
        t & \gets t+1
     \end{align*}
}
\ENDWHILE 
\STATE \textbf{Return:} $r_{\theta_{t}}$
\end{algorithmic}
\end{algorithm}
We multiply a $\max(2c_i-1, 0)$ in front of the loss for each sample as an approximation of how confident the current model predicts the preference label. When the reward is approximately similar, the coefficient goes to $0$, putting less weights on those samples.
Below we show that asymptotically, the new iterative data smoothing V2 algorithm is better at preserving the preference distribution compared with the original version.
\begin{theorem}\label{thm:dr_asymp}
Consider the multi-armed bandit problem with the number of samples going to infinity and a fixed sampling distribution $\mu$. Assume that $\mu(a, a')>0$ for any $a, a'>0$. Then we have
\begin{itemize}
    \item Any stationary point for Algorithm~\ref{alg:refine} satisfies $\forall a, a', \hat r(a) = \hat r(a')$;
    \item There is one stationary point for Algorithm~\ref{alg:refine_v2} that satisfies
    \begin{align*}
        \forall a, a',  \hat r(a) - \hat r(a') =  r^\star(a) - r^\star(a'). 
    \end{align*}
\end{itemize} 
\end{theorem}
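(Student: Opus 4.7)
My plan is to reduce both algorithms to a finite-dimensional fixed-point analysis in the infinite-sample limit. For each pair $(a, a')$, same-pair samples have identical update rules, so I collapse them into per-pair variables $y_{a,a'}$ (resp.\ $c_{a,a'}$), and use that the per-pair average of the observed hard labels concentrates on $p^\star_{a,a'} := \sigma(r^\star(a) - r^\star(a'))$. Write $d_{a,a'} := \hat r(a) - \hat r(a')$ and $p_{a,a'} := \sigma(d_{a,a'})$.

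For Algorithm~\ref{alg:refine_v2}, the second bullet is a direct existence check. The $c$-stationarity forces $c_{a,a'} = p_{a,a'}$; substituting into the gradient of the confidence-reweighted loss reduces the $\theta$-stationarity to
\[
\sum_{a'} \mu(a,a')\, \max(2 p_{a,a'} - 1, 0)\, (p^\star_{a,a'} - p_{a,a'}) = 0 \qquad \text{for every } a.
\]
Setting $\hat r(a) - \hat r(a') = r^\star(a) - r^\star(a')$ makes $p_{a,a'} = p^\star_{a,a'}$, which annihilates the second factor pair by pair and exhibits the desired stationary point.

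For Algorithm~\ref{alg:refine}, the $y$-stationarity forces $y_{a,a'} = p_{a,a'}$, and substituting makes the $\theta$-stationarity equation $\sum_{a'} \mu(a,a')(y_{a,a'} - p_{a,a'}) = 0$ trivially hold, so the bare fixed-point equations admit a continuum of solutions. The theorem's first bullet therefore has to be read as a statement about the stationary point that the joint dynamics actually select from the canonical initialization $\hat r \equiv 0$, $y_{i,0} = y_i$. I would analyze this via the continuous-time gradient flow in the spirit of Section~\ref{sec:benefits_dr}: the coupled updates for $(d_{a,a'}, y_{a,a'})$ admit the per-pair conserved quantity $\beta d_{a,a'}(t) + 2\alpha\mu(a,a')\, y_{a,a'}(t)$. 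Starting from $d(0) = 0$ and $y(0) = p^\star_{a,a'}$, any limiting stationary point must satisfy $\beta d^\star + 2\alpha\mu(a,a')\, \sigma(d^\star) = 2\alpha\mu(a,a')\, p^\star_{a,a'}$; in the regime of fast label smoothing ($\beta$ dominating $\alpha\mu$) the unique solution is $d^\star \to 0$, giving $\hat r(a) = \hat r(a')$ for all $a, a'$.

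The main obstacle is exactly this subtlety for Algorithm~\ref{alg:refine}: the algebraic fixed-point set is a continuum (it even contains $\hat r = r^\star$), so the theorem's claim must be read as a dynamical statement about the point the algorithm converges to from its canonical initialization under the intended scaling of $\alpha$ versus $\beta$. Extending the clean per-pair conservation-law argument to the full system, where the same $\hat r(a)$ appears in many comparisons and introduces cross-pair coupling through $\nabla_\theta$, is the delicate technical step. The contrast with Algorithm~\ref{alg:refine_v2} is structural: the factor $\max(2c-1,0)$ multiplying the per-sample loss breaks the automatic cancellation at $y_i = p_i$, so a genuinely distinct, nontrivial stationary point at $\hat r - \hat r' = r^\star - r^{\star\prime}$ survives.
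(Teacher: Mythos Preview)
For the second bullet (Algorithm~\ref{alg:refine_v2}) your verification is exactly what the paper does: set $c_{a,a'}=p_{a,a'}$ from the $c$-stationarity, substitute into the $\theta$-gradient, and observe that $\hat r-\hat r'=r^\star-r^{\star\prime}$ makes every summand vanish.

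For the first bullet (Algorithm~\ref{alg:refine}) you diverge from the paper, and the divergence stems from which gradient equation you substitute $\hat y=p$ into. You write the $\theta$-stationarity as $\sum_{a'}\mu(a,a')(y_{a,a'}-p_{a,a'})=0$, which is indeed trivially satisfied once $y_{a,a'}=p_{a,a'}$, and from this you conclude the algebraic fixed-point set is a continuum and reach for a dynamical/conserved-quantity argument. The paper does \emph{not} do this. It uses the per-pair gradient formula already derived in Section~\ref{sec:benefits_dr}, in which a single soft label $\hat y$ is attached to the pair and enters the winning and losing samples as $\hat y$ and $1-\hat y$ respectively. That formula reads, with $p=\sigma(\hat r(a)-\hat r(a'))$ and $\mu^{\pm}=\mu(a\gtrless a')$,
\[
\big(\mu^{+}\hat y+\mu^{-}(1-\hat y)\big)(1-p)\;=\;\big(\mu^{-}\hat y+\mu^{+}(1-\hat y)\big)p .
\]
Substituting the $y$-stationarity $\hat y=p$ here does \emph{not} cancel; it collapses to $(1-\mu^{+})(1-2p)=0$, hence $p=1/2$ and $\hat r(a)=\hat r(a')$. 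So the paper's argument is a two-line algebraic solve, valid for any $\alpha,\beta$, with no appeal to time-scale separation or conserved quantities.

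Your alternative route is therefore both more elaborate and weaker: it recovers $d^\star\to 0$ only in the regime $\beta\gg\alpha\mu$, which does not match the unconditional ``any stationary point'' claim, and you yourself flag the cross-pair coupling as an unresolved technical step. The gap is that you applied the $y$-stationarity to the wrong gradient expression; working instead with the single-$\hat y$ formulation the paper sets up in Section~\ref{sec:benefits_dr}, the continuum you worry about never appears, and the result follows by direct substitution.
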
 
\begin{proof}
The stationary points for Algorithm~\ref{alg:refine} and~\ref{alg:refine_v2} are the points where the gradients equal $0$. For Algorithm~\ref{alg:refine}, this is equivalent to $\hat y = \frac{\exp(  \hat r(a))}{\exp( \hat r(a)) + \exp( \hat r(a'))}$, and 
\begin{align*}
     & ({\mu(a\succ a')} \cdot \hat y +  \mu(a \prec a') \cdot (1-\hat y) ) \cdot  \frac{\exp(  \hat r(a'))}{\exp( \hat r(a')) + \exp( \hat r(a'))}  \\
    = &({ \mu(a\prec a')} \cdot \hat y +  \mu(a \succ a') \cdot (1-\hat y) ) \cdot  \frac{\exp(\hat   r(a))}{\exp( \hat r(a)) + \exp( \hat r(a'))}.
\end{align*}
Here $\hat y$ can be different for different $(a, a')$.
Solving the above equation gives that the only stationary point is $\hat y = 1/2$ and $\hat r(a) - \hat r(a') = 0$.

On the other hand, for Algorithm~\ref{alg:refine_v2}, the stationary point condition is equivalent to $\hat c(a,a') = \frac{\exp(  \hat r(a))}{\exp( \hat r(a)) + \exp( \hat r(a'))}$, and 
\begin{align*}
     &\sum_{a'} \max(2\hat c(a,a')-1, 0)\cdot\left({\mu(a\succ a')} \cdot  \frac{\exp(  \hat r(a'))}{\exp( \hat r(a')) + \exp( \hat r(a'))} -  { \mu(a\prec a')} \cdot  \frac{\exp(\hat   r(a))}{\exp( \hat r(a)) + \exp( \hat r(a'))}\right) = 0.
\end{align*}
Thus one can verify that $\hat r(a) - \hat r(a') = r^\star(a) - r^\star(a')$ satisfies the stationary condition.  
This proves the result.     
\end{proof}
\begin{remark}
    Although the asymptotic stationary points of Algorithm~\ref{alg:refine} do  not contain the ground truth, the two-scale analysis discussed in Section~\ref{sec:benefits_dr} shows that when one of the step size is much larger than the other such that  one of the updates in $\hat y$ or $\hat r$ is slower (and thus does not hit the stationary point),  the reward still converges to the ground truth for those sufficiently observed arms. However, preliminary experiments on Algorithm~\ref{alg:refine_v2} show that the result is worse than that of Algorithm~\ref{alg:refine}, and also suffer from reward overfitting. This together with the failure of MLE  may suggest that asymptotic result does not reflect the practical performance with smaller sample size compared with number of parameters.
\end{remark} 
\begin{remark}
    The condition of $\mu(a, a')>0$ can be relaxed to that the comparison graph induced by the Laplace matrix $L$ is connected, since the reward is identifiable in this case~\citep{shah2015estimation}. 
\end{remark}
\section{Additional Experiments}\label{app:exp}
% We first provide more experimental results for the bandit case with different settings.

The hyper-parameters for the neural network experiments are listed in table \ref{tab:hyp_stackx}.
\begin{table}[H]
    \centering
    \begin{tabular}{c|c|c}
      Model & Parameter&Value \\
    \hline 
         \multirow{4}{*}{Reward model}&  learning rate $\alpha$ & $10^{-5}$ \\
        & label update parameter $\beta$ & 0.7 \\
       & batch size & 128 \\ 
       & eval \& save steps & 100 \\
       \hline 
         \multirow{10}{*}{Policy model}&  max sequence length& 1024   \\
         &max output length& 500 \\
         &generation temperature & 1.0 \\
         & batch size  & 64 \\
         & fixed KL   coefficient & 0.001   \\
         & number of rollouts & 128 \\
         & PPO epochs & 4 \\
         & value coefficient & 0.5 \\
         & GAE  coefficient $\lambda$ & 0.95 \\
         & discount factor & 1.0 \\ 
         & clip range & 2 \\
    \end{tabular}
    \caption{Hyper-parameters for the neural network experiments}
    \label{tab:hyp_stackx}
\end{table}

We also include additional experiments on a different dataset, TLDR\footnote{\url{https://huggingface.co/datasets/CarperAI/openai_summarize_comparisons}}, in Figure~\ref{fig:tldr_reward} and \ref{fig:tldr_policy} of this section. The settings follow the same as HH in Section~\ref{sec:exp_nn}.  One can see that in the case of TLDR, the test accuracy does not drop significantly like HH. However, even with small difference in loss and the accuracy, the resulting policy reward difference is still significant. 

\begin{figure}[!htbp]
     \centering
     \begin{subfigure}[b]{0.49\linewidth}
         \centering
         \includegraphics[width=\textwidth]{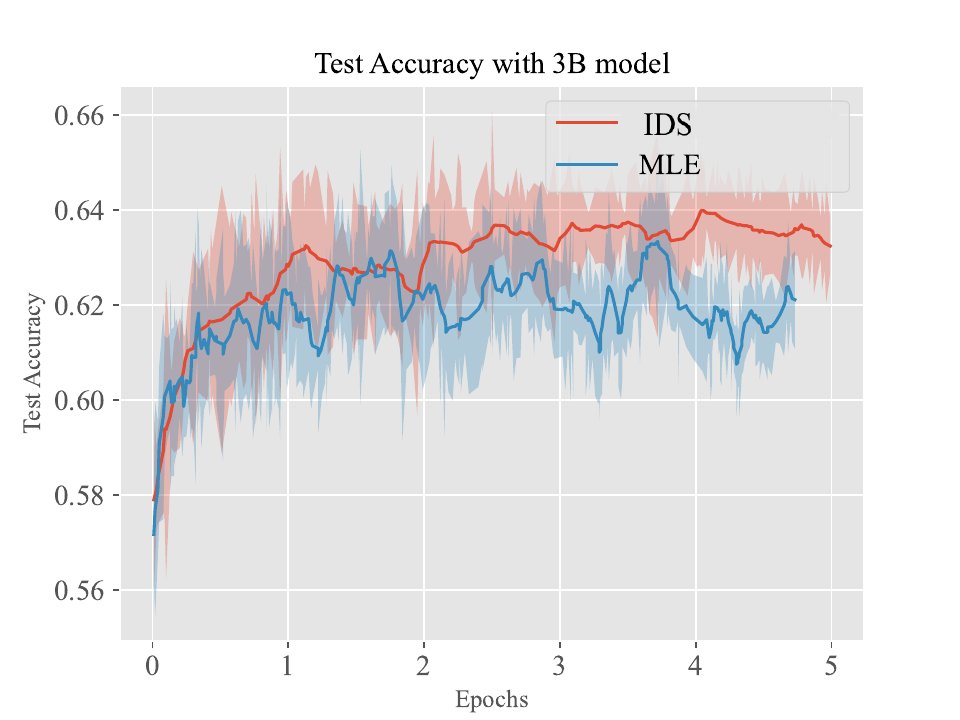}
         %\caption{$y=x$}
     \end{subfigure}
     \hfill
     \begin{subfigure}[b]{0.49\linewidth}
         \centering
         \includegraphics[width=\textwidth]{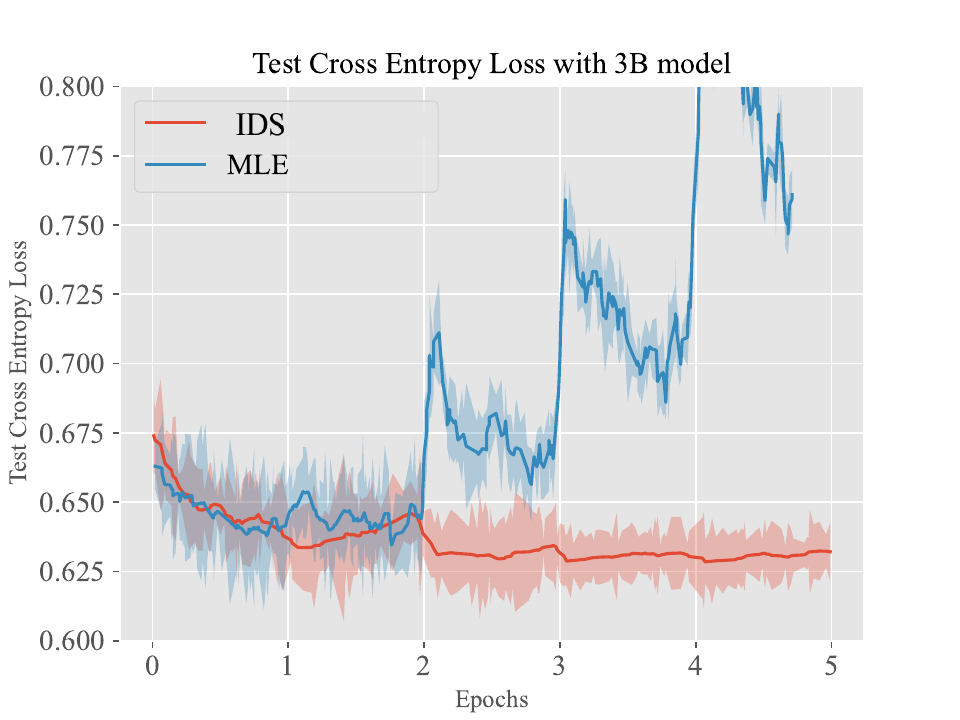}
         %\caption{$y=3\sin x$}
     \end{subfigure}
     \begin{subfigure}[b]{0.49\linewidth}
         \centering
         \includegraphics[width=\textwidth]{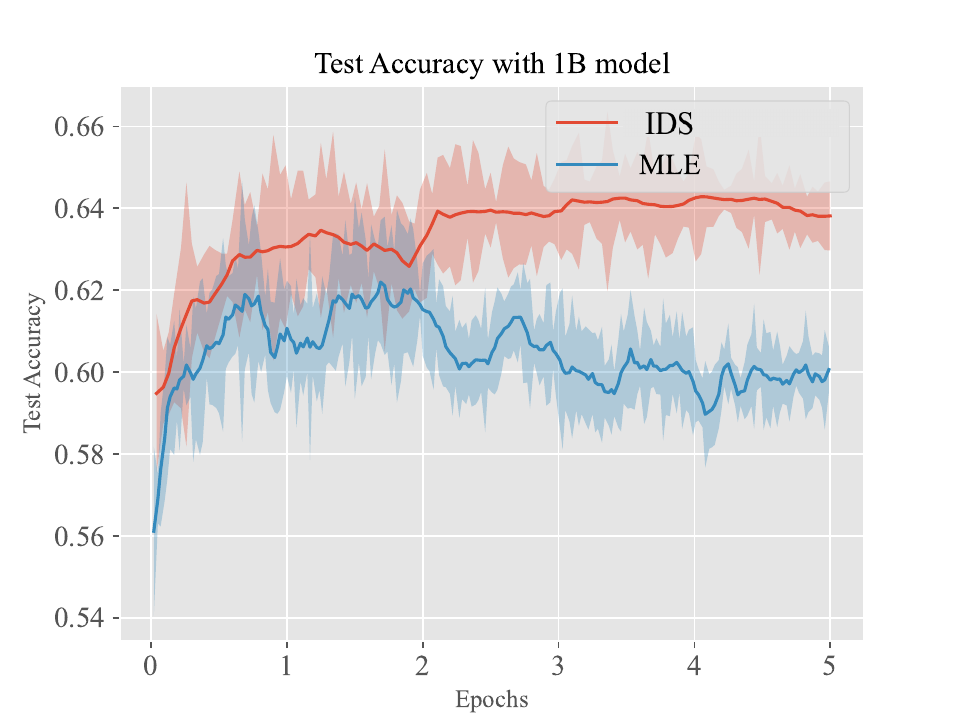}
         %\caption{$y=x$}
     \end{subfigure}
     \hfill
     \begin{subfigure}[b]{0.49\linewidth}
         \centering
         \includegraphics[width=\textwidth]{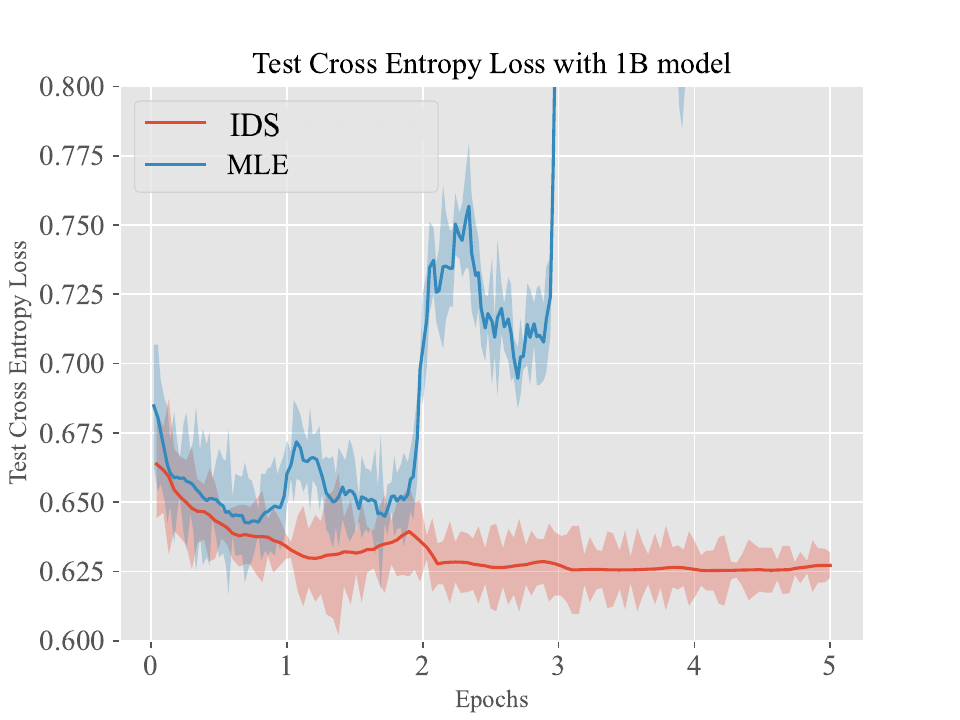}
         %\caption{$y=3\sin x$}
     \end{subfigure}
      \begin{subfigure}[b]{0.49\linewidth}
         \centering
         \includegraphics[width=\textwidth]{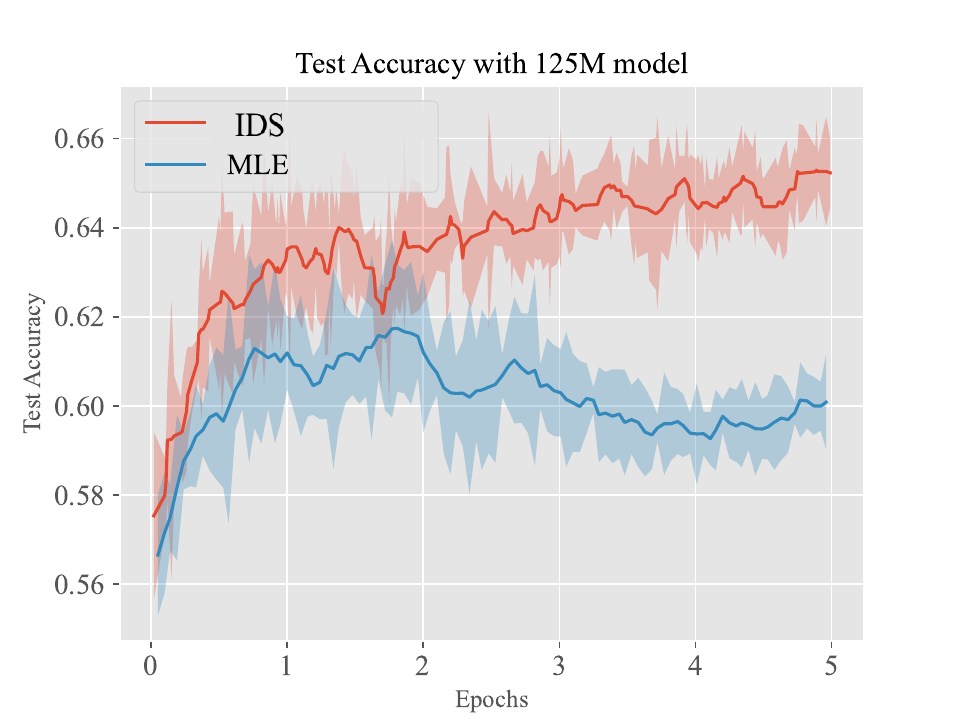}
         %\caption{$y=x$}
     \end{subfigure}
     \hfill
     \begin{subfigure}[b]{0.49\linewidth}
         \centering
         \includegraphics[width=\textwidth]{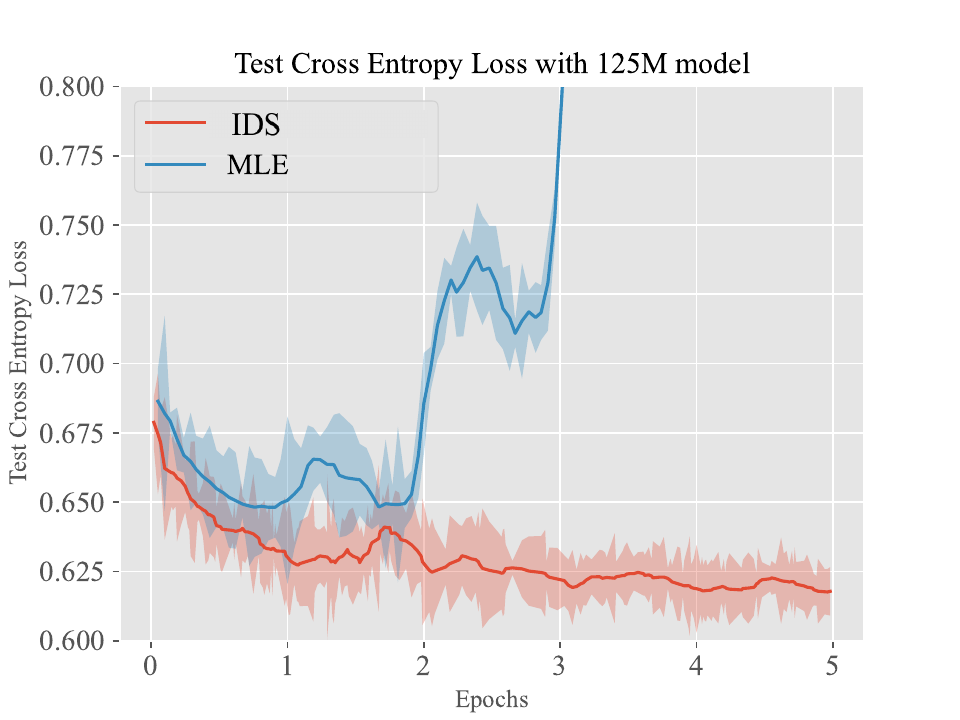}
         %\caption{$y=3\sin x$}
     \end{subfigure}
    \caption{Comparisons of MLE and Iterative Data Smoothing  when the reward is parameterized by a neural network. }
    \label{fig:tldr_reward}
\end{figure}
\begin{figure}[!htbp]
    \centering
    \includegraphics[width=0.5\textwidth]{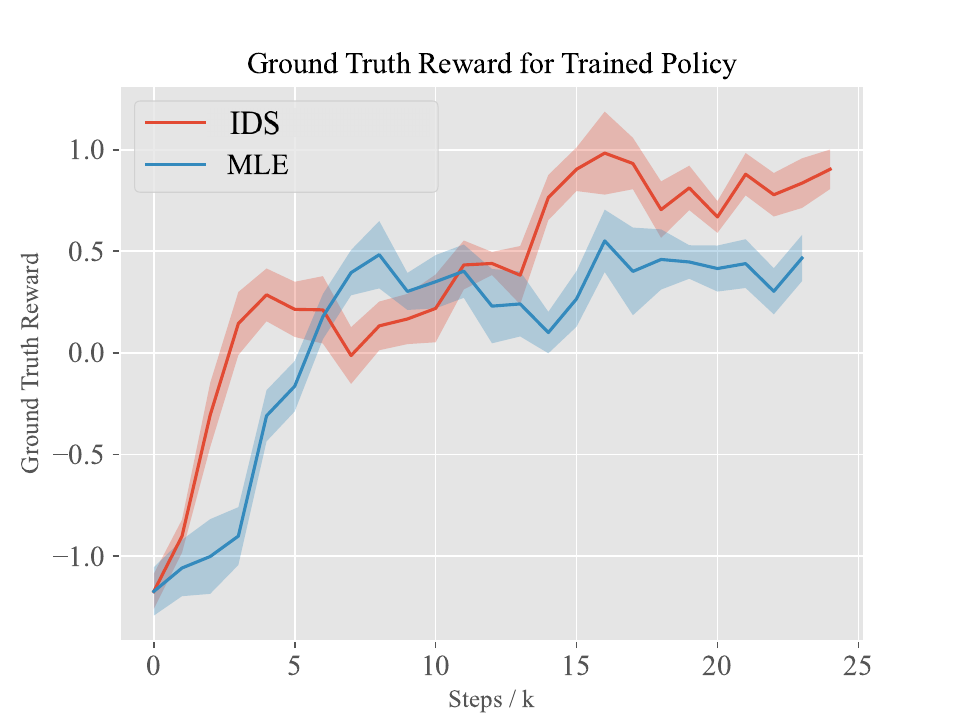}
    \caption{Comparison of MLE and Iterative Data Smoothing for policy learning.}
    \label{fig:tldr_policy}
\end{figure}

\section{Proof of Theorem~\ref{thm:mle}}\label{proof:mle_consistency}
\begin{proof}
Let $\mathbb{P}_r(a, a', c) = \mathbb{P}_r(a\succ a')$ if $c=1$, and $\mathbb{P}_r(a'\succ a) $ if $c=0$ be the density function of the observations.
According to Theorem 6.1.3. of \citet{hogg2013introduction}, it suffices to verify the following conditions for the consistency of MLE:
\begin{itemize}
    \item The CDFs are distinct, i.e. $\mathbb{P}_r(a, a', c) = \mathbb{P}_{r'}(a, a', c) $ almost everywhere implies that $ r = r'$. This is true since the distribution is supported on discrete space, and the equality implies that $r(i)-r(j) = r(i)' - r(j)'$ for any $i, j$, and  $r(M) = r'(M) = 1$.
    \item The PDFs have common support for all $r$.  This is true since the probability is positive for any $a, a', c$.
    \item The point $r^\star$ is an interior point in $\mathbb{R}^K$. This is  true by definition, since any open ball of radius $\epsilon$ around $r^\star$ is a subset of the space.
    %\item Let $\mathbb{P}_r(a, a', c) = \mathbb{P}_r(a\succ a')$ if $c=1$, and $\mathbb{P}_r(a'\succ a) $ if $c=0$ be the density function of the observation. We know that the  $\mathbb{P}_r(a, a', c)$ is continuous in $r$ for all $a, a'\in[K]$ and $c\in\{0, 1\}$.
    %\item There exists a function $F(a, a', c)$ such that $\mathbb{E}[F(a, a', c)]<\infty$ and $\log \mathbb{P}_r(a, a', c) - \log \mathbb{P}_{r_0}(a, a', c) \leq F(a, a', c)$ for any $\|r\|_\infty \leq B$ and $a, a', c$. This is true since we can take $F(a, a', c) =  2B+2$ due to the boundedness of $\log \mathbb{P}_r$.
    %\item For all $r$ and all sufficiently small $\rho>0$,
    %\begin{align*}
    %    \sup_{|r-r'|\leq \rho} \mathbb{P}_r(a, a', c)
   % \end{align*}
    %is measureable in $a, a', c$. This is true since  $a, a', c$ is supported on a discrete set.

\end{itemize}
\end{proof}
\section{Proof of Theorem~\ref{thm:lower_mle}}\label{proof:lower_mle}
\begin{proof}
The construction is in similar spirit to~\cite{rashidinejad2021bridging} and~\cite{zhu2023principled}.  Consider a  bandit problem where $r^\star(a) = \mathbbm{1}(a=1)$. For any fixed $n$, we set $\mu(1, 2) = 1-1/n$,  $\mu(1, 3) = 1/n$. 

In this hard instance, there is constant probability that arm $3$ is only compared with arm $1$ once. Concretely, we  have
\begin{align*}
    \mathbb{P}(n(1,3)=1) & = n\cdot (1-\mu(1, 3))^{n-1}\cdot \mu(1, 3)= \left(1-1/n\right)^{n-1}.
\end{align*}
When $n \geq 500$, we  have $ \mathbb{P}(n(1,3)=1) \geq 0.36$. 
Under this case, we know that arm $3$ is preferred with probability at least $\exp(r(3)) / (\exp(r(1))+\exp(r(3))) >0.26$. When there is only one comparison between arm $1$ and $3$, and arm $3$ is preferred, the MLE assigns $r(3)$ as infinity.  
Even when the reward for arm $1$ is estimated perfectly, this leads to a population cross-entropy loss arbitrarily large.
 
\end{proof}

\section{Proof of Corollary~\ref{thm:fail_mle}}\label{proof:fail_mle}
\begin{proof}
    
The proof  follows immediately from  Theorem~\ref{thm:lower_mle}. Under the same construction, we know that $\hat r_{\mathsf{MLE}}(3) = +\infty$ with probability at least $0.09$. Thus, the sub-optimality of the resulting optimal policy is at least $1$.
\end{proof}
\section{Proof of Theorem~\ref{thm:gd}}\label{proof:gd} 
\begin{proof}
Let $\hat r_i$ be the reward for the $i$-th arm, and $\hat r = [\hat r_1, \hat r_2,\cdots, \hat r_K]$ as the vector for the reward. 
One can calculate the  gradient of the reward as 
\begin{align*}
\nabla_{\hat r_i} \mathcal{L}_{\mathsf{CE}}(\mathcal{D}, \hat r) & =- \frac{1}{n}
\sum_{i=1}^n \nabla_{\hat r_i} \Bigg( y_i\log\left(\frac{\exp(\hat r_{a_i})}{\exp(\hat r_{a_i}) + \exp(\hat r_{a_i'})}\right)   + (1-y_i))\log\left(\frac{\exp(\hat r_{a_i'})}{\exp(\hat r_{a_i}) + \exp(\hat r_{a_i'})}\right)\Bigg) \nonumber \\ 
 & =- \frac{1}{n}
\sum_{i=1}^n  \Bigg(\frac{y_i\exp(\hat r_{a_i'})}{\exp(\hat r_{a_i})) + \exp(\hat r_{a_i'})} - \frac{(1-y_i)\exp(\hat r_{a_i})}{\exp(\hat r_{a_i}) + \exp(\hat r_{a_i'})} \Bigg) \nonumber \\ 
& = -\frac{1}{2}\cdot (n_+(i) - n_-(i)).
\end{align*}
Here the last equality is due to that  all the reward   is initialized at the same value. And $n_+(i)$ (or $n_-(i)$) refers the total number of winning (or losing) of arm $i$ in the observations.

We assume all the reward is initialized at $0$ without loss of generality. After one step gradient, we have
\begin{align*}
    \hat r(i) = \alpha (n_+(i) - n_-(i)).
\end{align*}
This proves the result. 

\end{proof}
\section{Proof of Theorem~\ref{thm:differential}}\label{proof:differential}

\begin{proof}
    From the differential equations in (\ref{eq:diff}), we know that
    \begin{align*}
        \frac{\dot{y}(t)}{y(t)} \geq -\beta.
    \end{align*}
    Taking integration on both sides give us
    \begin{align*}
        y(t) \geq \exp(-\beta t) \geq \exp(-\epsilon).
    \end{align*}
    Now we set a Lyapunov function $V(t) =  \left(\frac{\exp(d(t))}{1+\exp(d(t))} - \mu\right)^2$. We know that
    \begin{align*}
        \dot{V}(t) & = 2\left(\frac{\exp(d(t))}{1+\exp(d(t))} - \mu\right)\cdot \frac{\exp(d(t))}{(1+\exp(d(t)))^2}\cdot \dot{d}(t) \\ 
        & = 2\alpha n \left(\frac{\exp(d(t))}{1+\exp(d(t))} - \mu\right)\cdot \frac{\exp(d(t))}{(1+\exp(d(t)))^2} \\
        & \qquad \cdot \left((\mu\cdot y(t) + (1-\mu)\cdot (1-y(t)))\cdot \frac{1}{1+\exp(d(t))} - ((1-\mu)\cdot y(t) + \mu\cdot (1-y(t)))\cdot \frac{\exp(d(t))}{1+\exp(d(t))}\right) \\
        & = 2\alpha n \left(\frac{\exp(d(t))}{1+\exp(d(t))} - \mu\right)\cdot \frac{\exp(d(t))}{(1+\exp(d(t)))^2} \cdot \left((2\mu-1)\cdot y(t) + 1-\mu- \frac{\exp(d(t))}{1+\exp(d(t))}\right) \\
        & = -2\alpha n \cdot  \frac{\exp(d(t))}{(1+\exp(d(t)))^2} \cdot \left(\frac{\exp(d(t))}{1+\exp(d(t))} - \mu\right)^2  \\ 
        & \qquad + 2\alpha n \cdot  \frac{\exp(d(t))}{(1+\exp(d(t)))^2} \cdot \left(\frac{\exp(d(t))}{1+\exp(d(t))} - \mu\right) \cdot (2\mu-1)\cdot (y(t)-1) \\
        & \stackrel{(i)}{\leq} 2\alpha n \cdot  \frac{\exp(d(t))}{(1+\exp(d(t)))^2} \cdot \left(-\left(\frac{\exp(d(t))}{1+\exp(d(t))} - \mu\right)^2 + 1-\exp(-\epsilon)\right) \\ 
        & = 2\alpha n \cdot  \frac{\exp(d(t))}{(1+\exp(d(t)))^2} \cdot \left(-V(t) + 1-\exp(-\epsilon)\right).
    \end{align*}
    Here (i) uses the fact that $y(t), \mu, \frac{\exp(d(t))}{1+\exp(d(t))}\in [0, 1]$.
    Now consider two scenarios. The first is that for any time $t\in[0, T]$, one always has $V(t) \geq 2(1-\exp(-\epsilon))$. In this case, we know that
    \begin{align}
         \dot{V}(t) & \leq 2\alpha n \cdot  \frac{\exp(d(t))}{(1+\exp(d(t)))^2} \cdot \left(-V(t) + 1-\exp(-\epsilon)\right) \nonumber \\
         & \leq -\alpha n  \cdot  \frac{\exp(d(t))}{(1+\exp(d(t)))^2} \cdot V(t) \nonumber \\
         & \leq 0. \label{eq:non_increasing}
    \end{align}
This shows that $V(t)$ is a non-increasing function.
Without loss of generality, assume that $\mu\geq 1/2$. We know that
\begin{align}
  V(t)\leq V(t'), \forall t>t'.\label{eq:inequality_V}
\end{align}
Now we prove that there must be $\frac{\exp(d(t))}{1+\exp(d(t))} \leq \mu$. If one can find some $t_0$ such that $\frac{\exp(d(t))}{1+\exp(d(t))} > \mu$, by the continuity of $\frac{\exp(d(t))}{1+\exp(d(t))}$ and the fact that $\frac{\exp(d(0))}{1+\exp(d(0))}=1/2$, one can find some $t_1<t_0$ such that  $\frac{\exp(d(t_1))}{1+\exp(d(t_1))} = \mu$. 
This gives that
\begin{align*}
   V(t_1) = 0 < V(t_0),
\end{align*}
which contradicts Equation (\ref{eq:inequality_V}). Thus we know that $\frac{\exp(d(t))}{1+\exp(d(t))} \leq \mu$ holds for any $t$. Furthermore, since we know that $V(t)$ is non-increasing, we know that $\frac{\exp(d(t))}{1+\exp(d(t))} \geq 1/2$. This also implies that
\begin{align*}
    \frac{\exp(d(t))}{(1+\exp(d(t)))^2} \geq \mu(1-\mu ).
\end{align*}
Similarly, we can prove the same condition holds when $\mu<1/2$. 
Thus we have
\begin{align*}
    \frac{\dot{V}(t)}{V(t)} \leq -\mu(1-\mu )\alpha n.
\end{align*}
By integrating over $t$ on both sides, we have
\begin{align*}
    V(t) \leq \exp(-\mu(1-\mu)\alpha n t)\cdot V(0)\leq  \exp(-\mu(1-\mu)\alpha n t).
\end{align*}
Here the last inequality uses the fact that $V(0)\in[0, 1]$.

On the other hand, assume that at some time point $t_0\in[0, T]$, we have $V(t_0) < 2(1-\exp(-\epsilon))$.  When  $V(T) > 2(1-\exp(-\epsilon))$, by the continuity of the function $V(\cdot)$, we know that there exists some $t_1$ such that $V(t_1) = 2(1-\exp(-\epsilon))$, and  for any $t\in[t_1, T]$, $V(t) \geq 2(1-\exp(-\epsilon))$. From Equation (\ref{eq:non_increasing}), we know that in the regime of $t\in[t_1, T]$, $V(t)$ is non-increasing. This contradicts with the fact that $V(T)>V(t_1)$. Thus we know that $$V(T) \leq 2(1-\exp(-\epsilon)).$$
 
\end{proof}

\end{document}